\DeclareMathAlphabet{\pazocal}{OMS}{zplm}{m}{n}
\DeclareMathAlphabet\mathbfcal{OMS}{cmsy}{b}{n}
\newtheorem{theorem}{Theorem}
\newtheorem{definition}{Definition}
\newtheorem{lemma}{Lemma}
\newtheorem{proposition}{Proposition}
\newtheorem{remark}{Remark}
\newtheorem{corollary}{Corollary}
\providecommand{\nor}[1]{\ensuremath{\left\lVert {#1} \right\rVert}}
\providecommand{\scal}[2]{\ensuremath{\left\langle{#1},{#2}\right\rangle}}
\providecommand{\scalT}[2]{\ensuremath{\left\langle{#1},{#2}\right\rangle}}
\newcommand{\R}{\mathbb R}
\providecommand{\scal}[2]{\ensuremath{\left\langle{#1},{#2}\right\rangle}}
\def\argmax{\operatornamewithlimits{arg\,max}}
\def\bit{\begin{itemize}}
\def\eit{\end{itemize}}
\def\ben{\begin{enumerate}}
\def\een{\end{enumerate}}
\definecolor{dkgreen}{rgb}{0,0.6,0}
\definecolor{gray}{rgb}{0.5,0.5,0.5}
\definecolor{mauve}{rgb}{0.58,0,0.82}
\tiny\color{gray},
\begin{document}

%

%
\runningauthor{Youssef Mroueh, Tom Sercu,  Anant Raj}

\twocolumn[
\aistatstitle{Sobolev Descent}
\aistatsauthor{ Youssef Mroueh$^{\dagger}$ \And Tom Sercu$^{\dagger}$ \And  Anant Raj$^{\star}$ }
\aistatsaddress{ $\dagger$ IBM Research, MIT-IBM Watson Lab $\star$  MPI  } ]

\begin{abstract}
We study a simplification of GAN training: the problem of transporting particles from a source to a target distribution.
Starting from the Sobolev GAN critic, part of the gradient  regularized GAN family, we show a strong relation with Optimal Transport (OT). Specifically with the less popular \emph{dynamic} formulation of OT that finds a path of distributions from source to target minimizing a ``kinetic energy''.
We introduce Sobolev descent that constructs similar paths by following  
gradient flows of a critic function in a kernel space or parametrized  by a neural network.
In the kernel version, we show convergence to the target distribution in the MMD sense.
We show in theory and experiments that regularization has an important role
in favoring smooth transitions between distributions,
avoiding large gradients from the critic.
This analysis in a simplified particle setting provides insight in paths to equilibrium in GANs.
\end{abstract}

\section{Introduction}

We  study the problem of transporting particles (cloud of high dimensional points) from a source to a target distribution, by incrementally following  gradient flows  of a critic function (Sobolev critic). We call this incremental process Sobolev Descent.
This can be seen as a simplified version of GAN training dynamics: the generator is replaced by a  set of $N$ particles in $\R^d$. The particles define a time evolving distribution $\nu_{q_{t}}$.
Rather than min-max optimization in GANs, we only have maximization of the critic function $f$ at each timestep $t$.
We parametrize the critic  either in an RKHS or with neural networks, leading us to Regularized Kernel and Neural Sobolev Descent respectively.

Optimal Transport (OT) \cite{Villani,santambrogio2015optimal,peyre2017computational} is increasingly gaining interest in the machine learning community.
The static formulation of OT, seeks an  optimal bijection $T$, defining a push forward operator from $q$ to $p$: $T_{\#}\nu_q=\nu_p$ (i.e. Monge problem, relaxed by Kantorovic to seek a coupling $\pi$ rather than a bijection $T$). 
While this static viewpoint is the most popular (e.g. WGAN \cite{WGAN} or recently \cite{salimans2018improving,geneway2017learning}),
we will be focusing instead on the \emph{dynamic} formulation of the Wasserstein-2 distance,
for which Benamou and Brenier \cite{dynamicTransport} showed that the OT problem has a fluid dynamic interpretation:
\begin{eqnarray}
& {W}^2_2(\nu_p,\nu_q)= \inf_{q_t,V_t} \int_{0}^1 \int \nor{V_t(x)}^2d\nu_{q_{t}}(x) dt\nonumber\\
& \text{ s.t } \frac{\partial q_t(x)}{\partial t}=-div(q_t V_t(x))~~ q_0=q, q_1=p.
 \label{eq:Benamou}
 \end{eqnarray}
The optimal transport problem in this perspective corresponds to finding a \emph{path of densities} $q_t$ advecting from $q$ to $p$ with optimal \emph{velocity fields} $V_{t}$ that minimize the \emph{kinetic energy}.
Note that a major limitation is the need for an explicit analytic expression of $p$ and $q$ in order to solve  for $q_t, V_t$ in Eq. \ref{eq:Benamou}.
\begin{figure}[ht!]
\centering
\includegraphics[width=0.45\textwidth]{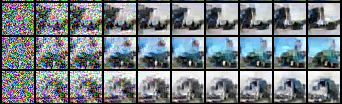}
\caption{Neural Sobolev descent paths in the space of $32\times32$ images. The source distribution here is as in GAN uniform noise and the target is the truck class in CIFAR 10. The main difference with GAN is that images in Sobolev descent are the particles moving along the Sobolev critic, while in GAN the generator adapts in a min-max game with the critic. }
\vskip -0.2in 
\label{fig:trucksintro}
\end{figure}

In GANs, the generator is updated with stochastic gradient descent along directions of the discriminator (=critic) gradient $\nabla_x f(x)$, which immediately suggests a link with the velocity fields $V_t$ in dynamic OT. In the recent GAN literature, variations have been studied where the gradient of the critic ($\mathbb{E}_{x \sim\mu} [ \nor{\nabla f(x)}]$) is constrained by adding a gradient penalty in the objective \cite{SobolevGAN,gulrajani2017improved}.
We will show that for this specific class of critics, in the simplified particle descent setting,
we construct paths between source and target distributions that
minimize a form of \emph{kinetic energy}.
Two advantages set it apart from dynamic OT: 
1) we need only samples from $p$ and $q$, and
2) the method is scalable (in sample size, input dimension and time complexity) because $f(x)$ is parametrized in an RKHS or with a neural network.

To define Sobolev  descent we start from the recently introduced regularized Kernel Sobolev Discrepancy \cite{mroueh2018kernelsobdiscrepTechRep} as a way to quantify \emph{the kinetic energy} that we wish to minimize (Section \ref{sec:KE}). 
We construct in Section \ref{sec:SDescent} paths of distributions from source to target that minimize this notion of kinetic energy.
We prove that under mild assumption kernel Sobolev descent converges in the MMD (Maximum Mean Discrepancy \cite{mmd}) sense: $\text{MMD}(\nu_p,\nu_{q_t})\to 0$ as $t\to \infty$.
We highlight the prominent role of regularization in getting tunable smooth paths
which relates to stable training in the GAN setting.
We discuss the connections to dynamic OT \cite{dynamicTransport} and Stein Descent of \cite{Stein,steindescent} in Section \ref{sec:previouswork}. Finally in  Section \ref{sec:alg}, we give  algorithms for Kernel and Neural Sobolev Descent.
We show the validity of our approach on synthetic data, image coloring and shape morphing and compare to classic OT algorithms.
We then validate that Sobolev descent is a proxy for GANs on high dimensional data: we move particles $\in \R^{3\times32\times32}$ from noise to match CIFAR10 images (Figure \ref{fig:trucksintro}).

\section{Kernel Sobolev Discrepancy}\label{sec:KE}

In this Section we review the Kernel Sobolev Discrepancy recently introduced in~\cite{mroueh2018kernelsobdiscrepTechRep}.
The Kernel Sobolev Discrepancy will be fundamental to our particles descent as it defines the notion of kinetic energy to be minimized.

\textbf{Sobolev Discrepancy.}
The Sobolev discrepancy was introduced recently in the context of Generative Adversarial Networks in Sobolev GAN \cite{SobolevGAN}.
We start by defining the Sobolev Discrepancy. Let $\pazocal{X}$ be a compact space in $\mathbb{R}^d$ with lipchitz boundary $\partial \pazocal{X}$.
\begin{definition}[Sobolev Discrepancy \cite{SobolevGAN,mroueh2018kernelsobdiscrepTechRep}] Let $\nu_{p},\nu_{q}$ be two measures defined on $\pazocal{X}$. The Sobolev Discrepancy is defined as follows:
\begin{align*}
\pazocal{S}(\nu_{p},\nu_{q})&= \sup_{f} \Big\{ \underset{x\sim \nu_{p}}{\mathbb{E}}f(x)-\underset{x\sim \nu_{q}}{\mathbb{E}}f(x)  
\Big\}\\
&\text{s.t } f \in W^{1,2}_0(\pazocal{X},\nu_{q}), \underset{x\sim \nu_{q}}{\mathbb{E}}\nor{\nabla_xf(x) }^2 \leq 1\\
&=\inf_{f}\Big \{\sqrt{\int_{\pazocal{X}} \nor{\nabla_x f(x)}^2 d\nu_{q}(x)} \Big\}\\
&\text{s.t }  p(x)-q(x)=-div(q(x)\nabla_x f(x)), f|_{\partial \pazocal{X}}=0
\label{eq:Sobolev}
\end{align*}
and $W^{1,2}_0(\pazocal{X},\nu_{q})= \{ f \text{ vanishes at the boundary of } \pazocal{X}\\ \text{ and }\underset{x\sim \nu_{q}}{\mathbb{E}} \nor{\nabla_x f(x)}^2< \infty\}$.
\end{definition}

We refer to $\nu_p$ as the target distribution, and $\nu_q$ as the source distribution.
The Sobolev discrepancy finds a witness function (or critic) that maximizes the mean discrepancy between the source and target distribution,
while constraining the witness function gradients semi-norm to be in a weighted Sobolev ball (under the source distribution $\nu_q$).
Note that the $\sup$ form (dual) is computationally friendly since it can be optimized using samples from $p$ and $q$.
The $\inf$ form (primal) sheds light on the physical meaning of this discrepancy:
it is the \textbf{minimum kinetic energy} needed to advect the mass $q$ to $p$ following gradients of a critic.
This interpretation will play a crucial role in Sobolev Descent. 

\textbf{Regularized Kernel Sobolev Discrepancy (RKSD).}
In order to define Sobolev descent we need to introduce a last ingredient: the Kernelized Sobolev Discrepancy.
In other words a kernelized measure of \emph{minimum kinetic energy for transporting $q$ to $p$}. To simplify the presentation we give in the main paper results for finite dimensional RKHS, all results for infinite dimensional RKHS are given in  Appendix \ref{app:InfDim}. 
 
\textbf{RKHS Properties and Assumptions.} Let $\mathcal{H}$ be a \emph{finite dimensional RKHS} with a finite feature map
$\Phi: x\to \Phi(x)\in \mathbb{R}^m$, hence with Kernel $k,$ $k(x,y)=\scalT{\Phi(x)}{\Phi(y)}=\sum_{j=1}^m \Phi_j(x)\Phi_{j}(y)$,
where $\scalT{}{}$ is the dot product in $\mathbb{R}^m$. Note that for a function $f\in \mathcal{H}$, $f(x)=\scalT{\boldsymbol{f}}{\Phi(x)}$, where $\boldsymbol{f}\in \mathbb{R}^m$
and $\nor{f}_{\mathcal{H}}=\nor{\boldsymbol{f}}$. Let $J\Phi(x) \in \mathbb{R}^{d\times m}$ be the jacobian of $\Phi$, $[J\Phi]_{a,j}(x)=\frac{\partial}{\partial x_a}\Phi_j(x) $.
We have the following expression of the gradient  $\nabla_x f(x)=(J\Phi(x)\boldsymbol{f}) \in \mathbb{R}^d$. 
Mild assumptions on $\mathcal{H}$ are required ($\Phi$ bounded and differentiable (A1), has bounded derivatives (A2), and zero boundary condition on $\Phi$ (A3)) and can be found in \cite{mroueh2018kernelsobdiscrepTechRep}.
\begin{remark}
Assumption (A3) on zero boundary condition can be weakened to  $q(x)\scalT{\nabla_x u^{\lambda}_{p,q}(x)}{n(x)}=0$ on $\partial \pazocal{X}$ ($n(x)$ is the normal on $\partial \pazocal{X}$). Assuming $\pazocal{X}=\mathbb{R}^d$ and that $q$ and $p$ vanish at $\infty$ we can use non vanishing feature maps $\Phi$ on $\partial \pazocal{X}$.
\end{remark}

The Kernel Sobolev Discrepancy \cite{mroueh2018kernelsobdiscrepTechRep} restricts the witness function of the Sobolev discrepancy to a finite dimensional RKHS $\mathcal{H}$,
with feature map $\Phi$. The Regularized Kernel Sobolev Discrepancy further regularizes the critic using Tikhonov regularization.
\vskip -0.1in
\begin{definition}[RKSD] Let $\mathcal{H}$ be a finite dimensional RKHS satisfying assumptions A1, A2 and A3.
Let $\lambda>0$ be the regularization parameter.
Let $\nu_{p},\nu_{q}$ be two measures defined on $\pazocal{X}$. The regularized Kernel Sobolev discrepancy restricted to the space $\mathcal{H}$ is defined as follows:
\begin{eqnarray}
\pazocal{S}_{\mathcal{H},\lambda}(\nu_p,\nu_q)&=& \sup_{f \in\mathcal{H} }\Big \{ \underset{x\sim \nu_{p}}{\mathbb{E}}f(x)-\underset{x\sim \nu_{q}}{\mathbb{E}}f(x) \}\nonumber\\
&\text{s.t }&\underset{x\sim \nu_{q}}{\mathbb{E}}\nor{\nabla_xf(x) }^2+\lambda \nor{f}^2_{\mathcal{H}} \leq 1\nonumber\\
\label{eq:SobolevH}
\end{eqnarray}
\end{definition}
\vskip -0.1in
We identify in the constraint in Equation \eqref{eq:SobolevH} a regularized operator defined by
 \begin{equation}
 D(\nu_q)=\mathbb{E}_{x\sim\nu_q}([J\Phi(x)]^{\top}J\Phi(x)).
 \label{eq:D}
 \end{equation}
 The constraint can be written as $\scalT{\boldsymbol{f}}{(D(\nu_q)+\lambda I_{m})\boldsymbol{f}}\leq 1$.
Following \cite{mroueh2018kernelsobdiscrepTechRep} we call $D(\nu_q)$ the Kernel Derivative Gramian Embedding (KDGE) of $\nu_q$.
KDGE is an operator embedding of the distribution.The KDGE can be seen as ``covariance'' of the jacobian. 
This operator embedding of $\nu_q$ is to be contrasted with the classic Kernel Mean Embedding (KME) of a distribution, $$\boldsymbol{\mu}(\nu_q)={\mathbb{E}}_{x\sim \nu_{q}}\Phi(x).$$
The KDGE can be thought as covariance of velocity fields (more on this intuition in Section \ref{sec:principal} ). 

The following proposition proved in \cite{mroueh2018kernelsobdiscrepTechRep} summarizes properties of the squared RKSD :
\begin{proposition} [Closed Form Expression of RKSD] 
Let $\lambda>0$. We have:
$\pazocal{S}^2_{\mathcal{H},\lambda}(\nu_p,\nu_q)= \sup_{\boldsymbol{u} \in \mathbb{R}^m } 2\scalT{\boldsymbol{u}}{\boldsymbol{\mu}(\nu_p)-\boldsymbol{\mu}(\nu_q)}-\scalT{\boldsymbol{u}}{(D(\nu_q)+\lambda I_m) \boldsymbol{u}}$.
This has the following closed form:$$\pazocal{S}^2_{\mathcal{H},\lambda}(\nu_p,\nu_q)= \nor{(D(\nu_q)+\lambda I_m)^{-\frac{1}{2}}\left(\boldsymbol{\mu}(\nu_p)-\boldsymbol{\mu}(\nu_q)\right)}^2$$
and the optimal witness function $u^{\lambda}_{p,q}$ of $\pazocal{S}^2_{\mathcal{H},\lambda}(\nu_p,\nu_q)$ satisfies: $u^{\lambda}_{p,q}(x)=\scalT{\boldsymbol{u}^{\lambda}_{p,q}}{\Phi(x)}$ where
$$(D(\nu_q)+\lambda I_m)\boldsymbol{u}^{\lambda}_{p,q}=\boldsymbol{\mu}(\nu_p)-\boldsymbol{\mu}(\nu_q).$$
Note that $\pazocal{S}^2_{\mathcal{H},\lambda}(\nu_p,\nu_q)=\int_{\pazocal{X}} \nor{\nabla_x u^{\lambda}_{p,q}(x)}^2 q(x)dx+\lambda \nor{\boldsymbol{u}^{\lambda}_{p,q}}^2 $
is the minimum regularized kinetic energy for advecting $q$ to $p$ using gradients of potentials in $\mathcal{H}$.
\end{proposition}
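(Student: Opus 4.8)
The plan is to push everything through the finite-dimensional coordinate representation $f\leftrightarrow\boldsymbol{f}$ afforded by the finite feature map, turning the constrained functional optimization in the definition of $\pazocal{S}_{\mathcal{H},\lambda}$ into the maximization of a linear form over an ellipsoid. First I would substitute $f(x)=\scalT{\boldsymbol{f}}{\Phi(x)}$ into the objective, so that $\mathbb{E}_{\nu_p}f-\mathbb{E}_{\nu_q}f=\scalT{\boldsymbol{f}}{\boldsymbol{\mu}(\nu_p)-\boldsymbol{\mu}(\nu_q)}$. For the constraint I use $\nabla_x f(x)=J\Phi(x)\boldsymbol{f}$ to write $\mathbb{E}_{\nu_q}\nor{\nabla_x f}^2=\scalT{\boldsymbol{f}}{D(\nu_q)\boldsymbol{f}}$ with $D(\nu_q)$ as in Eq.~\eqref{eq:D}, and $\lambda\nor{f}^2_{\mathcal{H}}=\lambda\nor{\boldsymbol{f}}^2$; exchanging expectation and the (finite-dimensional, bounded by A1--A2) quadratic form, the constraint collapses to $\scalT{\boldsymbol{f}}{A\boldsymbol{f}}\le 1$ where $A:=D(\nu_q)+\lambda I_m$. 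Since each $[J\Phi(x)]^{\top}J\Phi(x)$ is positive semidefinite, so is $D(\nu_q)$, and with $\lambda>0$ the matrix $A$ is positive definite, hence invertible with a well-defined positive-definite square root $A^{1/2}$.

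Next I would establish the closed form. Writing $\boldsymbol{b}:=\boldsymbol{\mu}(\nu_p)-\boldsymbol{\mu}(\nu_q)$ and changing variables $\boldsymbol{g}=A^{1/2}\boldsymbol{f}$, the problem becomes $\sup_{\nor{\boldsymbol{g}}\le 1}\scalT{\boldsymbol{g}}{A^{-1/2}\boldsymbol{b}}$, which by Cauchy--Schwarz equals $\nor{A^{-1/2}\boldsymbol{b}}$, attained at $\boldsymbol{g}^{\star}=A^{-1/2}\boldsymbol{b}/\nor{A^{-1/2}\boldsymbol{b}}$. Squaring gives $\pazocal{S}^2_{\mathcal{H},\lambda}(\nu_p,\nu_q)=\nor{A^{-1/2}\boldsymbol{b}}^2=\scalT{\boldsymbol{b}}{A^{-1}\boldsymbol{b}}$, the stated closed form. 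Undoing the change of variables shows the constrained maximizer is $\boldsymbol{f}^{\star}=A^{-1}\boldsymbol{b}/\nor{A^{-1/2}\boldsymbol{b}}$, i.e.\ a positive rescaling of $A^{-1}\boldsymbol{b}$.

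To recover the unconstrained variational form, I would exploit the one-homogeneity of the linear objective: for $\boldsymbol{u}=t\boldsymbol{f}$ with $\scalT{\boldsymbol{f}}{A\boldsymbol{f}}=1$ and $t\in\R$, the concave quadratic $2\scalT{\boldsymbol{u}}{\boldsymbol{b}}-\scalT{\boldsymbol{u}}{A\boldsymbol{u}}=2t\scalT{\boldsymbol{f}}{\boldsymbol{b}}-t^2$ is maximized at $t=\scalT{\boldsymbol{f}}{\boldsymbol{b}}$ with value $\scalT{\boldsymbol{f}}{\boldsymbol{b}}^2$; taking the supremum over the unit ellipsoid then yields exactly $\pazocal{S}^2_{\mathcal{H},\lambda}$. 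Equivalently, one directly maximizes the concave quadratic over $\boldsymbol{u}\in\R^m$ by setting its gradient $2\boldsymbol{b}-2A\boldsymbol{u}$ to zero, giving the witness equation $A\boldsymbol{u}^{\lambda}_{p,q}=\boldsymbol{b}$, i.e.\ $(D(\nu_q)+\lambda I_m)\boldsymbol{u}^{\lambda}_{p,q}=\boldsymbol{\mu}(\nu_p)-\boldsymbol{\mu}(\nu_q)$, and optimal value $\scalT{\boldsymbol{b}}{A^{-1}\boldsymbol{b}}$, matching the closed form.

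Finally, the kinetic-energy identity follows by plugging $\boldsymbol{u}^{\lambda}_{p,q}=A^{-1}\boldsymbol{b}$ back in: $\int_{\pazocal{X}}\nor{\nabla_x u^{\lambda}_{p,q}(x)}^2 q(x)dx+\lambda\nor{\boldsymbol{u}^{\lambda}_{p,q}}^2=\scalT{\boldsymbol{u}^{\lambda}_{p,q}}{A\boldsymbol{u}^{\lambda}_{p,q}}=\scalT{\boldsymbol{b}}{A^{-1}\boldsymbol{b}}=\pazocal{S}^2_{\mathcal{H},\lambda}$. I expect no deep obstacle here: the whole argument is finite-dimensional convex duality, and the only points needing care are the positive-definiteness of $A$ (which legitimizes $A^{-1/2}$ and the change of variables) and the interchange of expectation with the quadratic forms, both guaranteed by assumptions A1--A2. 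The mildly subtle conceptual step is recognizing that the constrained (definition) and penalized (variational) formulations share the same optimum up to the scalar rescaling identified above, so that the penalized maximizer $\boldsymbol{u}^{\lambda}_{p,q}=A^{-1}\boldsymbol{b}$ is the natural witness whose energy realizes the discrepancy.
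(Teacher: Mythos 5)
Your proof is correct. Note that the paper itself does not prove this proposition --- it defers to the cited technical report \cite{mroueh2018kernelsobdiscrepTechRep} --- so there is no in-paper argument to diverge from; your derivation (coordinatize via the finite feature map, reduce the constraint to the ellipsoid $\scalT{\boldsymbol{f}}{(D(\nu_q)+\lambda I_m)\boldsymbol{f}}\leq 1$, apply Cauchy--Schwarz after the change of variables, and relate the constrained and penalized forms by one-homogeneity) is the standard route and handles the only delicate points (positive definiteness of $D(\nu_q)+\lambda I_m$ for $\lambda>0$, and the rescaling relating the constrained witness to $\boldsymbol{u}^{\lambda}_{p,q}$, consistent with the paper's remark that the witness in the constrained problem is $u^{\lambda}_{p,q}/\pazocal{S}_{\mathcal{H},\lambda}$) correctly.
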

Note that RKSD is related to one of the most commonly used distances between distributions via embedding in RKHS, the maximum mean discrepancy \cite{mmd}
\[\text{MMD}(\nu_p,\nu_{q_t})=\nor{\boldsymbol{\mu}(\mu_p)-\boldsymbol{\mu}(\nu_{q_{t}}},\]
with the main difference is that the KMEs in RKSD are whitened in the space defined by the KDGE defined in Eq. \eqref{eq:D}.
\begin{remark}
  a) From this proposition we see that $\nabla_xu^{\lambda}_{p,q}(x)$ can be seen as velocities of minimum regularized kinetic energy, advecting $q$ to $p$.
  b) We give here the expression of the witness function $u^{\lambda}_{p,q}$ of $\pazocal{S}^2_{\mathcal{H},\lambda}(\nu_p,\nu_q)$ rather than $\pazocal{S}_{\mathcal{H},\lambda}(\nu_p,\nu_q)$ for convenience.
  The witness in \eqref{eq:SobolevH} is 
$u^{\lambda}_{p,q}/\pazocal{S}_{\mathcal{H},\lambda}$.

\end{remark}

\textbf{Empirical RKSD.}
An estimate of the Sobolev critic given finite samples from $p$ and $q$ $\{x_i,i=1\dots N, x_i \sim p\}$, and $\{y_i,i=1\dots M, y_i \sim q\}$ is straightforward:
$\hat{u}^{\lambda}_{p,q}(x)=\scalT{\boldsymbol{\hat{u}}^{\lambda}_{p,q}}{\Phi(x)}_{\mathbb{R}^m},$
where $\boldsymbol{\hat{u}}^{\lambda}_{p,q}=(\hat{D}(\hat{\nu}_q)+\lambda I_m)^{-1} \left(\hat{\boldsymbol{\mu}}(\hat{\nu}_p)-\hat{\boldsymbol{\mu}}(\hat{\nu}_q) \right)$.
With the empirical KDGE is given by $\hat{D}(\hat{\nu_q})=\frac{1}{M}\sum_{j=1}^M [J\Phi(y_j)]^{\top}J\Phi(y_j)$, and the emprical KMEs $\hat{\boldsymbol{\mu}}(\hat{\nu}_p)=\frac{1}{N}\sum_{i=1}^N \Phi(x_i)$ and $\hat{\boldsymbol{\mu}}(\hat{\nu}_q)=\frac{1}{M}\sum_{j=1}^M \Phi(y_j)$.

\section{Sobolev Descent}\label{sec:SDescent}

\textbf{Discrete Sobolev Descent.} Now that we have a notion of Kernelized kinetic energy (the RKSD) and velocity fields consisting of the gradients of the Sobolev critic that achieve the
minimum kinetic energy, we are ready to introduce the Sobolev Descent.
Our main result will be to construct an infinitesimal transport map $T^\varepsilon$ of the source distribution $\nu_q$, and show that the resulting distribution $\nu_{q_{[T^\varepsilon]}}$ converges to the target distribution $\nu_p$ in the MMD sense.
For $x \sim \nu_q$, moving along the gradient flow of the optimal regularized Sobolev critic $u^{\lambda}_{p,q}$ results in a decrease in MMD.
We prove in Theorem \ref{theo:PertubationMMDSobolev} (all proofs are given in Appendix \ref{app:Proofs}) that, using the infinitesimal transport map:
$$T^\varepsilon(x)=x+\varepsilon \nabla_x u^{\lambda}_{p,q}(x),~ x\sim \nu_q,$$
the push forward $T^{\varepsilon}_{\#}\nu_q=\nu_{q_{[T^\varepsilon]}}$ ensures that this transport map decreases the  MMD in the following sense:
$$\frac{d}{d\varepsilon}\text{MMD}^2(\nu_p, T^{\varepsilon}_{\#}\nu_q)\Big|_{\varepsilon=0} \leq 0,$$
where the first variation $\frac{d}{d\varepsilon}\text{MMD}^2(\nu_p, T^{\varepsilon}_{\#}\nu_q)|_{\varepsilon=0}= \lim_{\varepsilon \to 0} \frac{\text{MMD}^2(\nu_p, T^{\varepsilon}_{\#}\nu_q)-\text{MMD}^2(\nu_p,\nu_q) }{\varepsilon}$.

\begin{theorem}[Gradient flows of the Regularized Sobolev Critic decrease the MMD distance]Let $\lambda>0$.
Let $u^{\lambda}_{p,q}$ be the solution of the regularized Kernel Sobolev discrepancy between $\nu_p$ and $\nu_q$ i.e $\boldsymbol{u}^{\lambda}_{p,q}=(D(\nu_q)+\lambda I_m)^{-1}(\boldsymbol{\mu}(\nu_p)-\boldsymbol{\mu}(\nu_q)).$
Consider the infinitesimal transport of $\nu_q$ via $T^\varepsilon(x)=x+\varepsilon \nabla_x u^{\lambda}_{p,q}(x)$. 
We have the following first variation of the $\text{MMD}^2$ under this particular perturbation:
\begin{align*}
&\frac{d}{d\varepsilon}\text{MMD}^2(\nu_p, T^{\varepsilon}_{\#}\nu_q)\Big|_{\varepsilon=0}\\
&=- 2\left(\text{MMD}^2(\nu_p,\nu_q)-\lambda \pazocal{S}^2_{\mathcal{H},\lambda}(\nu_p,\nu_q)\right) \leq 0. 
\end{align*}
\label{theo:PertubationMMDSobolev}
\vskip -0.2 in
\end{theorem}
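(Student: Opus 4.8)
The plan is to differentiate the squared MMD directly through the kernel mean embedding, since $\text{MMD}^2(\nu_p, T^{\varepsilon}_{\#}\nu_q)=\nor{\boldsymbol{\mu}(\nu_p)-\boldsymbol{\mu}(T^{\varepsilon}_{\#}\nu_q)}^2$ and the entire $\varepsilon$-dependence enters through the pushed-forward embedding $\boldsymbol{\mu}(T^{\varepsilon}_{\#}\nu_q)=\mathbb{E}_{x\sim\nu_q}\Phi(x+\varepsilon\nabla_x u^{\lambda}_{p,q}(x))$. First I would Taylor-expand $\Phi$ along the displacement field $v(x)=\nabla_x u^{\lambda}_{p,q}(x)=J\Phi(x)\boldsymbol{u}^{\lambda}_{p,q}$: componentwise $\Phi_j(x+\varepsilon v(x))=\Phi_j(x)+\varepsilon\scalT{\nabla_x\Phi_j(x)}{v(x)}+O(\varepsilon^2)$, which collected into vector form gives $\Phi(T^{\varepsilon}(x))=\Phi(x)+\varepsilon[J\Phi(x)]^{\top}J\Phi(x)\boldsymbol{u}^{\lambda}_{p,q}+O(\varepsilon^2)$. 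Taking the expectation under $\nu_q$ and recognizing the KDGE $D(\nu_q)=\mathbb{E}_{x\sim\nu_q}[J\Phi(x)]^{\top}J\Phi(x)$ yields $\boldsymbol{\mu}(T^{\varepsilon}_{\#}\nu_q)=\boldsymbol{\mu}(\nu_q)+\varepsilon D(\nu_q)\boldsymbol{u}^{\lambda}_{p,q}+o(\varepsilon)$.

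Then I would substitute this expansion, writing $\boldsymbol{\Delta}=\boldsymbol{\mu}(\nu_p)-\boldsymbol{\mu}(\nu_q)$, into $\nor{\boldsymbol{\Delta}-\varepsilon D(\nu_q)\boldsymbol{u}^{\lambda}_{p,q}+o(\varepsilon)}^2=\nor{\boldsymbol{\Delta}}^2-2\varepsilon\scalT{\boldsymbol{\Delta}}{D(\nu_q)\boldsymbol{u}^{\lambda}_{p,q}}+o(\varepsilon)$, so that the first variation equals $-2\scalT{\boldsymbol{\Delta}}{D(\nu_q)\boldsymbol{u}^{\lambda}_{p,q}}$. The key algebraic step is to eliminate $D(\nu_q)$ using the defining normal equation $(D(\nu_q)+\lambda I_m)\boldsymbol{u}^{\lambda}_{p,q}=\boldsymbol{\Delta}$ from the Proposition, i.e. $D(\nu_q)\boldsymbol{u}^{\lambda}_{p,q}=\boldsymbol{\Delta}-\lambda\boldsymbol{u}^{\lambda}_{p,q}$. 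This turns the variation into $-2(\nor{\boldsymbol{\Delta}}^2-\lambda\scalT{\boldsymbol{\Delta}}{\boldsymbol{u}^{\lambda}_{p,q}})$; identifying $\nor{\boldsymbol{\Delta}}^2=\text{MMD}^2(\nu_p,\nu_q)$ and, via $\boldsymbol{u}^{\lambda}_{p,q}=(D(\nu_q)+\lambda I_m)^{-1}\boldsymbol{\Delta}$, the whitened quantity $\scalT{\boldsymbol{\Delta}}{\boldsymbol{u}^{\lambda}_{p,q}}=\scalT{\boldsymbol{\Delta}}{(D(\nu_q)+\lambda I_m)^{-1}\boldsymbol{\Delta}}=\pazocal{S}^2_{\mathcal{H},\lambda}(\nu_p,\nu_q)$ recovers the stated identity.

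For the inequality it remains to check $\text{MMD}^2(\nu_p,\nu_q)-\lambda\pazocal{S}^2_{\mathcal{H},\lambda}(\nu_p,\nu_q)\geq 0$. I would write it as the quadratic form $\scalT{\boldsymbol{\Delta}}{(I_m-\lambda(D(\nu_q)+\lambda I_m)^{-1})\boldsymbol{\Delta}}$ and use the identity $I_m-\lambda(D(\nu_q)+\lambda I_m)^{-1}=D(\nu_q)(D(\nu_q)+\lambda I_m)^{-1}$. Since $D(\nu_q)$ is symmetric positive semidefinite and commutes with $(D(\nu_q)+\lambda I_m)^{-1}$, this operator is positive semidefinite (its eigenvalues are $\sigma/(\sigma+\lambda)\geq 0$ where $\sigma\geq 0$ ranges over the spectrum of $D(\nu_q)$), so the form is nonnegative and hence the first variation is $\leq 0$.

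The main obstacle is the first step: rigorously justifying the first-order expansion of $\boldsymbol{\mu}(T^{\varepsilon}_{\#}\nu_q)$, namely interchanging the $\varepsilon$-derivative with the expectation over $\nu_q$ and controlling the remainder uniformly in $x$. This is exactly where assumptions A1--A2 (boundedness and differentiability of $\Phi$ with bounded derivatives) enter: they supply an integrable dominating bound for the difference quotients, so that dominated convergence applies and the $O(\varepsilon^2)$ remainder vanishes after division by $\varepsilon$. The subsequent algebra is routine given the closed forms in the Proposition.
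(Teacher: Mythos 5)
Your proof is correct, but it reaches the key identity by a genuinely different route than the paper. The paper works in an Eulerian picture: it first expands the pushed-forward \emph{density} to get the perturbation $d\chi_u(x)=-\mathrm{div}(q(x)\nabla_x u^{\lambda}_{p,q}(x))\,dx$, invokes a general first-variation lemma $\frac{d}{d\varepsilon}\text{MMD}^2(\nu_p,\nu_q+\varepsilon\chi)\big|_{\varepsilon=0}=-2\int\delta_{p,q}\,d\chi$, and then integrates by parts (divergence theorem plus the zero-boundary assumption A3) to turn $\int\delta_{p,q}\,\mathrm{div}(q\nabla_x u^{\lambda}_{p,q})$ into $-\scalT{\boldsymbol{\delta}_{p,q}}{D(\nu_q)\boldsymbol{u}^{\lambda}_{p,q}}$. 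You instead work in a Lagrangian picture, Taylor-expanding $\Phi\circ T^{\varepsilon}$ under the expectation and landing on the same quantity $-2\scalT{\boldsymbol{\delta}_{p,q}}{D(\nu_q)\boldsymbol{u}^{\lambda}_{p,q}}$ directly; from there the algebra via the normal equation is identical. Your route is more elementary in the finite-dimensional RKHS setting: it needs neither a density for $\nu_q$, nor the divergence theorem, nor assumption A3, only A1--A2 to dominate the difference quotients as you note. What the paper's route buys is generality: by isolating the continuity-equation form of the perturbation and the abstract first variation $\int\frac{\delta F}{\delta\nu_q}d\chi$, it connects directly to the Benamou--Brenier/Mckean--Vlasov machinery used for the continuous-time analysis and extends more naturally to the infinite-dimensional case in Appendix C. Your argument for the sign is also slightly different and in fact sharper: you factor $I_m-\lambda(D(\nu_q)+\lambda I_m)^{-1}=D(\nu_q)(D(\nu_q)+\lambda I_m)^{-1}$ and read off nonnegativity of the quadratic form from the eigenvalues $\sigma/(\sigma+\lambda)$, whereas the paper uses the cruder operator-norm bound $\pazocal{S}^2_{\mathcal{H},\lambda}\leq\frac{1}{\lambda}\text{MMD}^2$; your version is essentially the spectral computation the paper defers to Lemma 2 of the appendix for the convergence analysis.
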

\vskip -0.2 in
\begin{remark} 1)  The $\leq 0$ of the RHS above is guaranteed since for any $\lambda>0$ we have
  $\pazocal{S}^2_{\mathcal{H},\lambda}(\nu_p,\nu_q)\leq\nor{(D(\nu_q)+\lambda I)^{-1/2}}^2_{op}\nor{\boldsymbol{\mu}(\nu_p)-\boldsymbol{\mu}(\nu_q)}^2\leq\frac{1}{\lambda} \text{MMD}^2(\nu_p,\nu_q)$ (where $\nor{.}_{op}$ is the operator norm).
  2) Assume $D(\nu_q)$ is non singular, Theorem \ref{theo:PertubationMMDSobolev} holds true for $\lambda=0$.
\end{remark}

From this Theorem we see that  when we move the mass from $q$ to $p$ along  the gradient flows of the regularized Sobolev critic, this results in a decrease in the MMD. Hence we  are making  progress towards matching $p$ in the MMD sense. The amount of progress is proportional to $(\text{MMD}^2(\nu_p,\nu_q)-\lambda \pazocal{S}^2_{\mathcal{H},\lambda}(\nu_p,\nu_q)) := \Delta_q$.

Theorem \ref{theo:PertubationMMDSobolev} suggests an iterative procedure that transports a source distribution $\nu_q$ to a target distribution $\nu_p$: we start with applying transform $T^\varepsilon_0(x)=x+\varepsilon \nabla_x u^{\lambda}_{p,q_0}(x)$
on $q_0=q$ which decreases the squared MMD distance by $\Delta_{q_0}$. This results in a new distribution $q_1(x) = q_{0 [T^\varepsilon_0]}(x)$.
To further decrease the MMD distance we apply a new transform on $q_1$, $T^\varepsilon_1(x)= x+\varepsilon \nabla_x u^{\lambda}_{p,q_1}(x)$; this results in a decrease of the squared MMD distance by $\Delta_{q_1}$.
By iterating this process we construct a path of distributions $\{q_{\ell}\}_{\ell=0\dots L-1}$ between $q_0$ and $p$: 
\begin{equation}
q_{\ell+1}=q_{\ell,[T^\varepsilon_{\ell}]} ~ \text{ where } T^\varepsilon_{\ell}(x)= x+\varepsilon \nabla_x u^{\lambda}_{p,q_{\ell}}(x), x \sim \nu_{q_{\ell}}.
\label{eq:path}
\end{equation}
We call this iterative process Sobolev Descent, and this incremental decrease in the MMD distance is summarized in the following corollary:

\begin{corollary}[Regularized Sobolev Descent Decreases the MMD]
Consider the path of distributions $q_{\ell}$ between $q_0=q$ and $p$ constructed in equation \eqref{eq:path} we have for $\ell \in \{0,\dots L-1\}$:
$\frac{d}{d\varepsilon}\text{MMD}^2(\nu_p,\nu_{q_{\ell+1}})\Big|_{\varepsilon=0}=- 2\left(\text{MMD}^2(\nu_p,\nu_{q_{\ell}})-\lambda \pazocal{S}^2_{\mathcal{H},\lambda}(\nu_p,\nu_{q_{\ell}})\right)\leq 0. $
\label{corr:descent}
\end{corollary}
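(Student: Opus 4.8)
The plan is to derive Corollary \ref{corr:descent} as an immediate consequence of Theorem \ref{theo:PertubationMMDSobolev}, applied once at each step of the iteration \eqref{eq:path}. The essential observation is that the transport map $T^\varepsilon_{\ell}(x)=x+\varepsilon\nabla_x u^{\lambda}_{p,q_{\ell}}(x)$ appearing at stage $\ell$ has exactly the same structure as the map $T^\varepsilon$ in the theorem, with the source measure $\nu_q$ replaced by the current iterate $\nu_{q_{\ell}}$ and the critic $u^{\lambda}_{p,q}$ replaced by $u^{\lambda}_{p,q_{\ell}}$. Since the theorem was stated for an arbitrary source/target pair, nothing in its hypotheses singled out the initial $\nu_q$, so it applies verbatim with $\nu_{q_{\ell}}$ in the role of the source.

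First I would fix $\ell\in\{0,\dots,L-1\}$ and regard $\nu_{q_{\ell}}$ as a fixed probability measure on $\pazocal{X}$, namely the well-defined result of composing the previous $\ell$ transports. By the closed-form Proposition, $u^{\lambda}_{p,q_{\ell}}$ is the solution of the regularized Kernel Sobolev discrepancy between $\nu_p$ and $\nu_{q_{\ell}}$, i.e. $\boldsymbol{u}^{\lambda}_{p,q_{\ell}}=(D(\nu_{q_{\ell}})+\lambda I_m)^{-1}(\boldsymbol{\mu}(\nu_p)-\boldsymbol{\mu}(\nu_{q_{\ell}}))$, which is precisely the hypothesis required by Theorem \ref{theo:PertubationMMDSobolev}. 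Invoking the theorem with source $\nu_{q_{\ell}}$ then yields
\[\frac{d}{d\varepsilon}\text{MMD}^2(\nu_p, T^{\varepsilon}_{\ell,\#}\nu_{q_{\ell}})\Big|_{\varepsilon=0}=- 2\left(\text{MMD}^2(\nu_p,\nu_{q_{\ell}})-\lambda \pazocal{S}^2_{\mathcal{H},\lambda}(\nu_p,\nu_{q_{\ell}})\right).\]
Since by construction \eqref{eq:path} the pushforward $T^{\varepsilon}_{\ell,\#}\nu_{q_{\ell}}$ equals $\nu_{q_{\ell+1}}$, substituting gives the claimed first-variation identity, and the $\leq 0$ bound is inherited from the theorem (equivalently from the spectral estimate $\pazocal{S}^2_{\mathcal{H},\lambda}(\nu_p,\nu_{q_{\ell}})\leq\frac1\lambda\text{MMD}^2(\nu_p,\nu_{q_{\ell}})$ in the remark following it).

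The only point needing care — and the closest thing to an obstacle — is verifying that assumptions A1–A3 continue to hold for every iterate $\nu_{q_{\ell}}$, not merely for $\nu_{q_0}$. This is essentially automatic: A1 and A2 constrain only the fixed feature map $\Phi$ (boundedness and bounded derivatives) and are independent of the measure, so the KDGE $D(\nu_{q_{\ell}})=\mathbb{E}_{x\sim\nu_{q_{\ell}}}([J\Phi]^{\top}J\Phi)$ is finite and $D(\nu_{q_{\ell}})+\lambda I_m$ is invertible for any $\lambda>0$ and any iterate. The boundary condition A3 (or its weakened form in the remark) likewise pertains to $\Phi$ on $\partial\pazocal{X}$; one need only check that the supports of the iterates remain inside $\pazocal{X}$, which I would record as a standing assumption. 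Granting this, the corollary follows by simply iterating the theorem over $\ell=0,\dots,L-1$.
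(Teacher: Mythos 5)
Your proposal is correct and follows essentially the same route as the paper: the authors likewise prove the corollary by invoking Theorem \ref{theo:PertubationMMDSobolev} at each stage $\ell$ with $\nu_{q_{\ell}}$ in the role of the source, noting that to first order in $\varepsilon$ the pushforward satisfies $q_{\ell+1}(x)=q_{\ell}(x)-\varepsilon\, div(q_{\ell}(x)\nabla_x u^{\lambda}_{p,q_{\ell}}(x))$, which is exactly the perturbation handled by the theorem. Your additional remarks on A1--A3 being measure-independent are a harmless elaboration the paper leaves implicit.
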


\paragraph{Continuous Sobolev Descent.} We have refered to points advecting from $q$ to $p$ via Sobolev descent as particles.
Let $t=\ell \varepsilon$ be the time variable, hence the time stepsize $dt=\varepsilon$ . Note $\nu_{q_t}$ the measure of the moving particles $X_{t}$  at time $t$. The continuous Sobolev descent  can be defined at the limit $\varepsilon \to 0$  as  the following non linear advection process on particles $X_{t}$ (whose distribution is $\nu_{q_t}$) advecting from $q$ to $p$ following the flow of the Sobolev critic :
$$dX_{t}= \nabla_{x}u_{p,q_{t}}(x) dt, X_0 \sim \nu_{q},$$
where $u_{p,q_{t}}$ is the Sobolev witness function between $\nu_p$ and $\nu_{q_{t}}$. In the next section we will analyse the convergence of the continuous Sobolev descent to the target distribution $\nu_p$.
  
\begin{figure*}[ht!]
\centering
\includegraphics[height=80px]{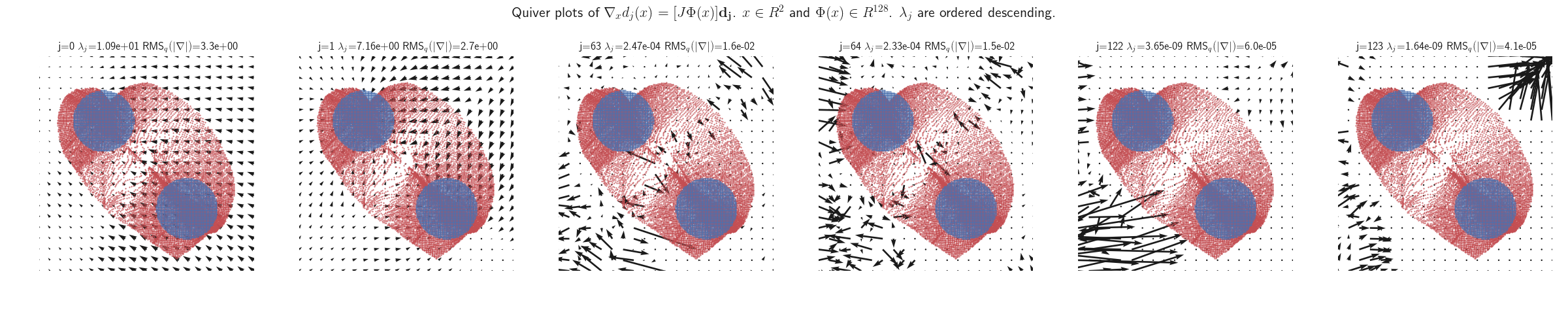}
\includegraphics[height=80px]{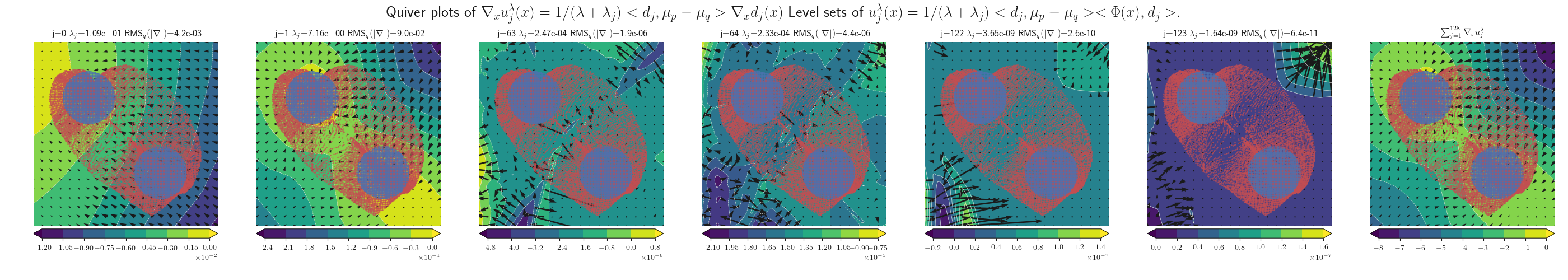}
\vskip -0.2in
\caption{
The principal transport directions for an intermediate state $q_t$ (red cloud) in the shape morphing application with Neural Sobolev Descent (see Figure \ref{fig:morphing_nsd}).
The top row shows $\nabla_x d_j(x)$, bottom row shows $\nabla_x u_j^\lambda(x)$ for $\lambda=0.3$.
Note how small $j$ (large eigenvalues) correspond to smooth vectorfields where the vectors have large norm (as measured in RMS over the points in point cloud $x \sim \nu_{q_t}$).
The intermediate and large $j$ values correspond to non-smooth vectorfields and non-smooth motions. For $\nabla_x u_j^\lambda(x)$, the principal transport directions $\nabla_x d_j(x)$ are multiplied with $\frac{1}{\lambda+\lambda_j}$ and the inner product with $\mu_p-\mu_q$ (two scalar multipliers). We see the non-smooth $\nabla_x u_j^\lambda(x)$ (small $\lambda_j$) have small RMS norm and contribute less, as they are effectively filtered out by the smoothing parameter $\lambda$.
The bottom right subplot shows the total critic $u^\lambda(x)=\sum_{j=1}^m u^\lambda_j(x)$.
}
\label{fig:Transp}
\vskip -0.2in
\end{figure*}

\subsection{Convergence of  Continuous Sobolev Descent}\label{sec:convergence}


In order to analyze the convergence of the continuous Sobolev descent, we will formulate the progress of MMD as a differential equation in time and show that the right hand side is always negative.
There are two distinct cases.

\textbf{Case 1: $\lambda=0$, Unregularized Sobolev Discrepancy Flows.}
Assume that $D(\nu_{q_{t}})$ is non singular, for all time steps $t$.
 Corollary \ref{corr:descent} suggests the following dynamic of the MMD for the continuous descent:
$$\frac{d}{dt}\text{MMD}^2(\nu_p,\nu_{q_t})=-2\text{MMD}^2(\nu_p,\nu_{q_t}).$$
This suggests a fast exponential convergence of $\nu_{q_{t}}$ to $\nu_p$ in the MMD sense: $\text{MMD}^2(\nu_p,\nu_{q_t})=e^{-2t}\text{MMD}^2(\nu_p,\nu_{q})$, i.e $\text{MMD}^2(\nu_p,\nu_{q_t})\to 0,$ as $t\to \infty$.
This fast convergence is not necessarily desirable as it may imply non-smooth paths with large discrete jumps from $q_0$ to $p$.
For instance $q_t(x)=(1-e^{-t})p(x)+e^{-t}q_0(x)$ exhibits this type of exponential convergence, 
but corresponds to intermediate distributions that are trivial interpolations
between source and target distributions, and don't correspond to a meaningful smooth path from source to target,
in the spirit of the Benamou-Brenier dynamic transport. See Figure \ref{fig:1d} for an illustration.
 
\textbf{Case 2: $\lambda>0$ Regularized Sobolev Discrepancy Flows.} In this case Corollary \ref{corr:descent} suggests the following non linear dynamic of the MMD :
\begin{align*}
\frac{1}{2}\frac{d}{dt}\text{MMD}^2(\nu_p,\nu_{q_{t}})&=-(\text{MMD}^2(\nu_p,\nu_{q_{t}})-\lambda \pazocal{S}^2_{\mathcal{H},\lambda}(\nu_p,\nu_{q_{t}}))\\
&\leq 0.
\end{align*}
Since $g(t)=\text{MMD}^2(\nu_p,\nu_{q_{t}})$ is decreasing and positive (bounded from below) it has a finite limit $L$ as $t\to \infty$. 
When $g(t)$ reaches this limit at $t=t_0$ we have $\frac{dg(t)}{dt}|_{t=t_0}=0$, and the graph of $g(t)$ remains constant, $g(t)=g(t_0)=L$ for $t\geq t_0$. Hence $\lim_{t\to \infty} \text{MMD}^2(\nu_p,\nu_{q_{t}})=L=g(t_0)$.
 
We make here the following assumption on the target distribution $\nu_p$ that ensures that this limit $L$ is zero.

 \textbf{Assumption (A)}: \emph{For any measure $\nu_q $, such that $\delta_{p,q}=\boldsymbol{\mu}(\nu_p)-\boldsymbol{\mu}(\nu_q)\neq 0$, $\delta_{p,q} \notin Null \left(  D(\nu_q)\right)$ .} 
Assumption  (A) means that we have: $D(\nu_q)(\boldsymbol{\mu}(\nu_p)-\boldsymbol{\mu}(\nu_q))\neq 0$, for all $q$ such that $\delta_{p,q}\neq 0$. This  is a reasonable assumption and it is usually met in practice. 

We show in Proposition \ref{pro:conv} in Appendix \ref{app:convReg} that under assumption A,   the regularized continuous Sobolev descent is convergent in the MMD sense:  $\lim_{t\to \infty} \text{MMD}^2(\nu_p,\nu_{q_{t}})=0$.

Now if Assumption (A) does not hold, Sobolev Descent may stall at a $\nu_{q_{t_0}}$ where $\delta_{p,q_{t_0}} \in Null(D(\nu_{q_{t_0}}))$, and $\text{MMD}^2(\nu_p,\nu_{q_{t}}) \to  \text{MMD}^2(\nu_p,\nu_{q_{t_0}}) \neq 0$ as $t \to \infty$.

\textbf{Infinite dimensional  RKHS, Characteristic kernel and Convergence in distribution of Sobolev Descent.} For $\lambda>0$, Theorem \ref{theo:PertubationMMDSobolev} holds true when $\Phi$ corresponds to an infinite dimensional feature map of a characteristic kernel $k$, without any further assumptions (The case $\lambda =0$ needs more care, and is tackled in Appendix \ref{app:InfDim}) . For $\lambda>0$, under assumption (A) and for a characteristic kernel, the convergence of Sobolev descent in the MMD sense $\text{MMD}(\nu_{p},\nu_{q_{t}})\to 0$ as $t\to \infty$, implies  convergence in distribution : $\nu_{q_{t}} \overset{D}{\to} \nu_{p}$.

\textbf{Damping effect of Regularization.} While the MMD decreases at each time step, 
 the regularization slows down the decrease of MMD by a factor proportional to the regularized Sobolev discrepancy $\lambda \pazocal{S}^2_{\mathcal{H},\lambda}(\nu_p,\nu_{q_{t}})$.
Therefore regularization here is not only playing a computational role that stabilizes computation, it is also playing the role of a \emph{damping}. This \emph{damping} is desirable as it favors smoother paths between $q_0$ and $p$, i.e paths that deviates from the exponential regime in the un-regularized case.
Hence we obtain tunable paths via regularization that favors smoother transitions from source to target (Fig \ref{fig:1d}).

\subsection{ Principal Transport Directions}\label{sec:principal}
\vskip -0.15in
In this section we shed light on how the flow of the Sobolev critic $\nabla_{x}u^\lambda_{p,q}(x)$ transports particles from $q$ to $p$.
To simplify notation we will omit the subscript $p,q$ in this section but keep in mind that $u^\lambda(x)$ is to be determined for any intermediate state $q_t$.
Let $(\lambda_j,\boldsymbol{d_j})$ be eigenvalues and eigenvectors of $D(\nu_q)$ (Eq \ref{eq:D}) with $\lambda_j\geq 0$ descending.
We can now think of $\nabla_x d_j(x)$ as \textbf{principal transport directions},
where
$\nabla_x d_j(x)=[J\Phi(x)]\boldsymbol{d_j}$.
This viewpoint becomes clear when we decompose the direction with which the particles move, i.e. the gradient of the critic $u^\lambda(x)$, over this basis $\nabla_x d_j(x)$.
It is easy to see that:
$u^{\lambda}(x)
= \sum_{j=1}^{m}\frac{1}{\lambda_j+\lambda}\scalT{\boldsymbol{d_j}}{\boldsymbol{\mu}(\nu_p)-\boldsymbol{\mu}(\nu_q)}\scalT{\boldsymbol{d_j}}{\Phi(x)}, $
 and
 
$\nabla_x u^{\lambda}(x)
=\sum_{j=1}^{m} \nabla_x u^\lambda_j(x)  
=\sum_{j=1}^{m}\frac{1}{\lambda_j+\lambda}\scalT{\boldsymbol{d_j}}{\boldsymbol{\mu}(\nu_p)-\boldsymbol{\mu}(\nu_q)}\nabla_x d_j(x).$

The Sobolev critic flow  $\nabla_x u^{\lambda}(x)$ is thus decomposed on those principal transport directions, where each principal transport direction $\nabla_x d_{j}(x)$ is weighted by $\frac{1}{\lambda_j+\lambda} a_j$  where $a_j =\scalT{\boldsymbol{d_j}}{\boldsymbol{\mu}(\nu_p)-\boldsymbol{\mu}(\nu_q)}$. Let us first look at the meaning of $a_j$: if $a_{j}>0$ this mean that this principal transport direction implies the correct motion advecting $q$ to $p$ (positively aligned with the difference of mean embeddings). 
On the other hand the term $\frac{1}{\lambda_j+\lambda}$, explains the role of  regularization. Regularization is introducing a spectral filter on principal transport directions by weighing down directions with low eigenvalues.
Those directions correspond to high frequency motions resulting in discrete jumps and discontinuous paths.
Filtering them out with regularization parameter $\lambda$ ensures smoother transitions in the probability path.
See Figure \ref{fig:Transp}  for an illustration. More in Appendix \ref{app:regu_smoothing}.

\subsection{Sobolev Descent as proxy for GANs} In this section we show how Sobolev descent can be seen as a proxy to GANs~\cite{GANoriginal} that is more  amenable to analysis. 
In Sobolev GAN \cite{SobolevGAN}, the critic  between the current implicit distribution of the  generator $G_{\theta}$ and the target distribution of real data $\mathbb{P}$ is  updated. Then the generator is updated via gradient descent on the parameter space $\theta$. This is similar to Sobolev descent,
with the difference that GAN has a generator that is updated with the gradient flow of the critic, while Sobolev descent transports explicitly particles along that flow.

More formally Sobolev GAN   \cite{SobolevGAN}  has the following updates:
$f_{t}=\argmax\{\mathbb{E}_{x\sim \mathbb{P}}f(x)-\mathbb{E}_{x\sim q_{t}}f(x): f\in \mathcal{H}, \mathbb{E}_{x\sim q_t} ||\nabla_x f(x) ||^2\leq1 \}$ where $q_{t}$ is the distribution of the generator $G_{\theta_{t}}(z),z\sim p_{z}$. Using a continuous form of gradient descent on the generator parameter $\theta$, we can write by the chain rule :  
 $$d\theta_{t}= \mathbb{E}_{\tilde{z}\sim p_z}\left[\frac{\partial G_{\theta}(\tilde{z})}{\partial \theta}\nabla_{x}f_{t}(G_{\theta}(\tilde{z}))\right]_{\theta=\theta_t} dt,$$ 
 where $\frac{\partial G_{\theta}(\tilde{z})}{\partial \theta} \in \mathbb{R}^{|\theta|\times d}$ is the Jacobian matrix.
 
In order to match the particles intuition of Sobolev descent, we show here how to go from generator to particles. Fix $z$ and set $X_{t}=G_{\theta_t}(z)$. $X_{t}$ defines moving particles as $\theta_{t}$ is updated . 
Our goal  is to see if  the velocity of  particles $X_{t}$ produced by the generator in Sobolev GAN has similar behavior to the particles velocities in Sobolev descent. Using  the chain rule we have : $dX_{t}=   \frac{\partial G_{\theta}(z)}{\partial \theta}^{\top}\Big|_{\theta=\theta_t}d\theta_{t}$.  Finally plugging the expression of $d\theta_{t}$ we have $dX_{t}= $
\vskip-0.3in
\begin{equation}
\mathbb{E}_{\tilde{z}\sim p_z}(\frac{\partial G_{\theta}(z)}{\partial \theta}^{\top}\frac{\partial G_{\theta}(\tilde{z})}{\partial \theta}\Big|_{\theta=\theta_{t}}\nabla_{x}f_{t}(G_{\theta_t}(\tilde{z})))dt
\label{eq:sobolevGANParticles}
\end{equation}
\vskip-0.20in
If $G_{\theta}$ satisfies 
 $(\frac{\partial G_{\theta}(z)}{\partial \theta}^{\top}\frac{\partial G_{\theta}(\tilde{z})}{\partial \theta})=\delta(z-\tilde{z}) I_d$ we recover the particles velocities of Sobolev descent:
$dX_{t}= p(z)\nabla_{x}f_t(X_t)dt,$ and our  convergence analysis immediately applies to Sobolev GAN.
Of course one needs to weaken the assumptions on $G_{\theta}$ to Lipschitzity of  the Jacobian $\frac{\partial G_{\theta}(.)}{\partial \theta}$ in the latent space variable $z$ and to carry further the analysis, we leave that for a future work. Our analysis of Sobolev descent suggests to consider gradient descent on the parameter space of the generator in GAN  as a gradient flow on the  probability space, corresponding to particles moving with a non linear Mckean Vlasov process \cite{Vlasov:1968} given in \eqref{eq:sobolevGANParticles}, and may allow under suitable conditions a theoretical understanding of GAN convergence complementing related works such as the ones of \cite{BottouALO17}. 


\section{Relation to Previous Work}\label{sec:previouswork}
Dynamic OT of Benamou-Brenier \cite{dynamicTransport} and Stein descent \cite{steindescent} are the closest to Sobolev Descent.
The Benamou-Brenier formulation and Sobolev Descent minimize two related forms of kinetic energy 
in order to find paths connecting source and target distributions.
Table \ref{tab:comparison} in Appendix \ref{app:diagram_ot} summarizes those main differences.
In the Stein method \cite{GorhamM15,LiuLJ16,SteinGretton, GorhamDVM16,GorhamM17}, one of the measures $\nu_p$ is assumed to have a known density function $p$ and we would like to measure the fidelity of samples from $\nu_q$ to the likelihood of $p$.
The Stein discrepancy is obtained by applying a differential operator $T(p)$ to a vector valued function $\varphi: \pazocal{X}\to \mathbb{R}^d$, where $T(p)\varphi(x)=\scalT{\nabla_x\log(p(x))}{\varphi(x)}+\text{div}(\varphi(x)).$
The Kernelized Stein Discrepancy is defined as follows:
$\mathbb{S}(\nu_p,\nu_q)=\sup_{\varphi}\{\mathbb{E}_{x\sim \nu_{q}}T(p)\varphi(x):\varphi_j \in \mathcal{H},\sum_{j=1}^d \nor{\varphi_j}^2_{\mathcal{H}}\leq 1\}$.
Let $\varphi^*_{p,q}$ be the optimal solution.
Variational Stein Descent of \cite{steindescent} uses  $\varphi^*_{p,q}$ as a velocity field to transport particles distributed according to $\nu_q$
to approximate the target $\nu_p$.
This  constructs paths reducing the KL divergence \cite{Stein}.

\section{Algorithms and Experiments}\label{sec:alg}
\textbf{Algorithms.}
We specify here the regularized Sobolev Descent for empirical measures $\hat{\nu}_p$ and $\hat{\nu}_q$,given finite samples from $p$ and $q$: $\{x_i,i=1\dots N, x_i \sim p\}$, and $\{y_i,i=1\dots M, y_i \sim q\}$.

\textbf{Empirical Regularized Kernel Sobolev Descent with Random Fourier Features.} We consider the finite dimensional RKHS induced by random Fourier features \cite{rf}($\Phi(x)=\cos(Wx+b),W_{ij}\sim \mathcal{N}(0,1/\sigma^2), b_i \sim \text{Unif}[0,2\pi] $) . 
The empirical descent consists in using the estimate $\hat{u}^{\lambda}_{p,q}$ in Equation \eqref{eq:path}.
For $\varepsilon>0$, we have the following iteration, for $\ell \geq 1$ and all current positions of source particles $ i=1,\dots M$:
$$x^{\ell}_i = x^{\ell-1}_i +\varepsilon \nabla_x \hat{u}^{\lambda}_{p,q_{\ell-1}}(x^{\ell-1}_i)$$ 
with $\hat{\nu}_{q_{\ell-1}}(dx)=\frac{1}{M}\sum_{i=1}^M \delta(x-x^{\ell-1}_i) dx$, the empirical measure of particles $\{x^{\ell-1}_i,i=1\dots M\}$, initialized with source particles $\{x^0_i=y_i,i=1\dots M\}$, and $\hat{u}^{\lambda}_{p,q_{\ell-1}}$ is the optimal critic of the empirical RKSD between empirical measure $\hat{\nu}_p$ and $\hat{\nu}_{q_{\ell-1}}$. 
The empirical regularized Kernel Sobolev Descent can be written as follows: for $l=\{1\dots L\}$:
$$\boldsymbol{\hat{u}}^{\lambda}_{p,q_{\ell-1}}= (\hat{D}(\hat{\nu}_{q_{\ell-1}})+\lambda I_m)^{-1} \left(\hat{\boldsymbol{\mu}}(\hat{\nu}_p)-\hat{\boldsymbol{\mu}}(\hat{\nu}_{q_{\ell-1}}) \right)$$
$$x^{\ell}_i = x^{\ell-1}_i +\varepsilon [J\Phi(x^{\ell-1}_i )]\boldsymbol{\hat{u}}^{\lambda}_{p,q_{\ell-1}} ,\forall i=1,\dots M.$$
The Empirical Sobolev Descent is summarized in Algorithm \ref{alg:KSD}, and the smoothness of the paths is controlled via the regularization parameter $\lambda$.

\textbf{Neural Sobolev Descent.} 
Inspired by the success of Sobolev GAN \cite{SobolevGAN} that uses neural network approximations to estimate the Sobolev critic, we propose Neural Sobolev Descent.
In Neural Sobolev Descent the critic function between $\nu_{q_t}$ and $\nu_{p}$ is estimated using a neural network $f_{\xi}(x)=\scalT{v}{\Phi_{\omega}(x)}$, where $\xi=(v,\omega)$ are the parameters of the neural network that we learn by gradient descent.
We follow \cite{SobolevGAN} in optimizing the parameters of the critic via an augmented Lagrangian.
The particles descent is the same as in the Kernelized Sobolev Descent.
Gradient descent on the parameters of the critic between updates of the particles resumes from the previous parameters (warm restart).
Neural Sobolev Descent is summarized in Algorithm \ref{alg:NSD}.
Note that when compared to Sobolev GAN this descent replaces the generator with particles.
It is worth mentioning that regularization in the neural context is obtained via early stopping, i.e the number of updates $n_c$ of the critic.
Early stopping is known as a regularizer for gradient descent \cite{yao2007earlystopping}.
We will see that the smoothness of the paths is controlled via $n_c$.
Note that GAN stabilization  through early stopping (small critic updates) has been empirically observed \cite{arjovsky2017towards,fedus2017many}. Our analysis suggests that this induces smoother paths for GANs.

\textbf{Experiments.} We confirm our theoretical findings on regularized Sobolev descent on a synthetic example highlighting  the crucial role of regularization in smooth paths convergence.
We then baseline Sobolev descent versus classical OT algorithms on the image color transfer problem.
We show well-behaved trajectories of Sobolev descent in shape morphing thanks to smooth regularized paths.
\vskip -0.2in
\begin{figure}[ht!]
\centering
\includegraphics[width=0.5\textwidth]{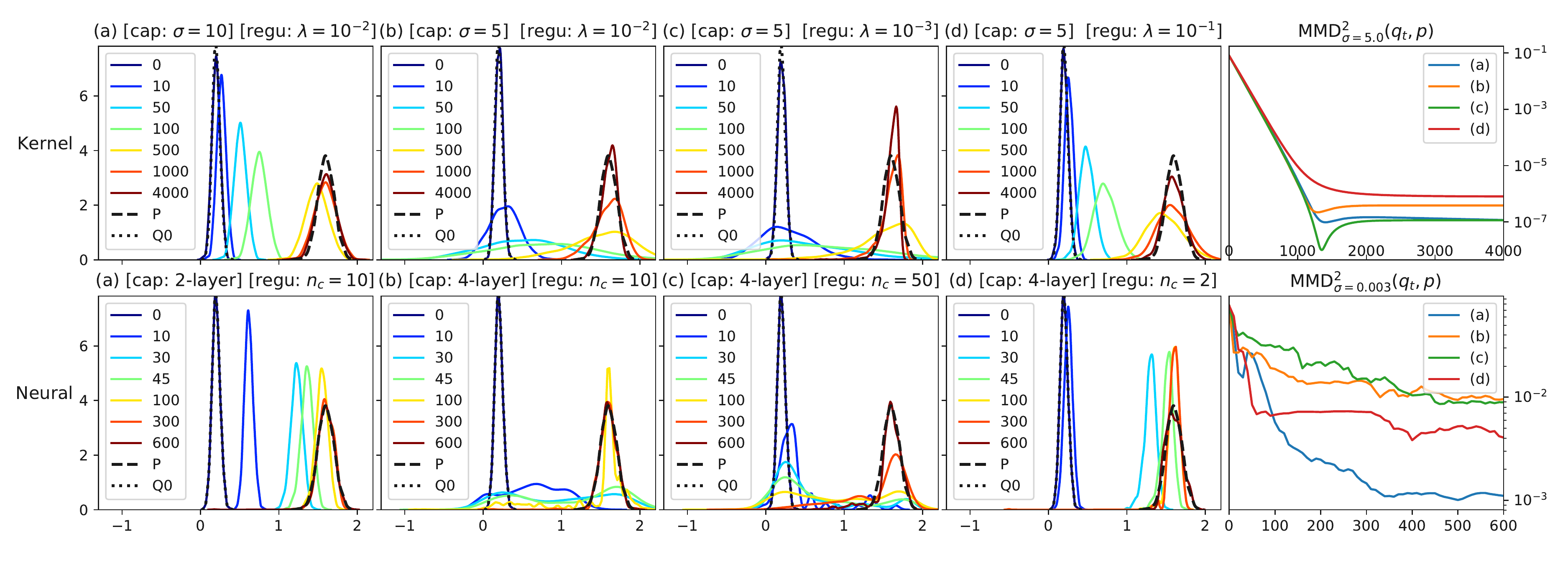}
\vskip -0.2in
\caption{Moving 1000 samples of a 1D gaussian $\nu_{q_0}=\mathcal{N}(0.2, \sigma=0.005)$ to $\nu_p=\mathcal{N}(1.6, \sigma=0.1)$ with Kernel (top row) and Neural (bottom row) Sobolev Descent.
Columns have similar properties between kernel and neural variants in terms of capacity of the model and regularization of the descent:
(a) low capacity, (b) high capacity,
(c) high capacity with decreased regularization and (d) high capacity with increased regularization.
}
\label{fig:1d}
\vskip -0.1in
\end{figure}

\textbf{Synthetic 1D Gaussians.}
Figure \ref{fig:1d} shows Sobolev descent trajectories on a toy 1D problem, where both source and target are 1D Gaussians.
Note that the Benamou-Brenier solution would be a \emph{smooth} trajectory of normal distributions, where both the mean and standard deviation linearly interpolate between $q_0$ and $p$.
Given $1000$ samples from $q_0$ and $p$, we show in Figure \ref{fig:1d} results of both kernel and neural Sobolev descent, where we plot kernel density estimators of densities at various time steps in the descent.
We show the results of the descent for varying capacity of the function space ($\sigma$ for the random features kernel, number of layers for Neural), and various regularization parameters
($\lambda$ Tikhonov regularization for Kernel and $n_c$ early stopping for Neural).
Column (a) shows a regularized low capacity model achieving good approximation of the Benamou-Brenier optimal trajectory, where the data remains concentrated and smoothly moves to the target distribution.
Column (b) shows a higher capacity model which blurs out the distribution before converging to $p$,
where column (c) we even further decrease the regularization (smaller $\lambda$, bigger $n_c$) confirming the undesirable interpolation behavior which is predicted by the theory in the un-regularized case.
Note that even in the Neural SD case this happens, corresponding to high frequency critic gradient behavior.
In column (d) we increase the regularization on the high capacity model, achieving again a behavior that is closer to the optimal, without blurring or interpolation.
This confirms the damping effect of regularization, filtering out the high frequency gradients.
This can be also seen in the MMD plot in the last column.
Figure \ref{fig:convergence} in Appendix \ref{app:conv} gives similar results on morphing.
\vskip -0.1 in
\begin{figure}[ht!]
\centering
\includegraphics[width=0.5\textwidth]{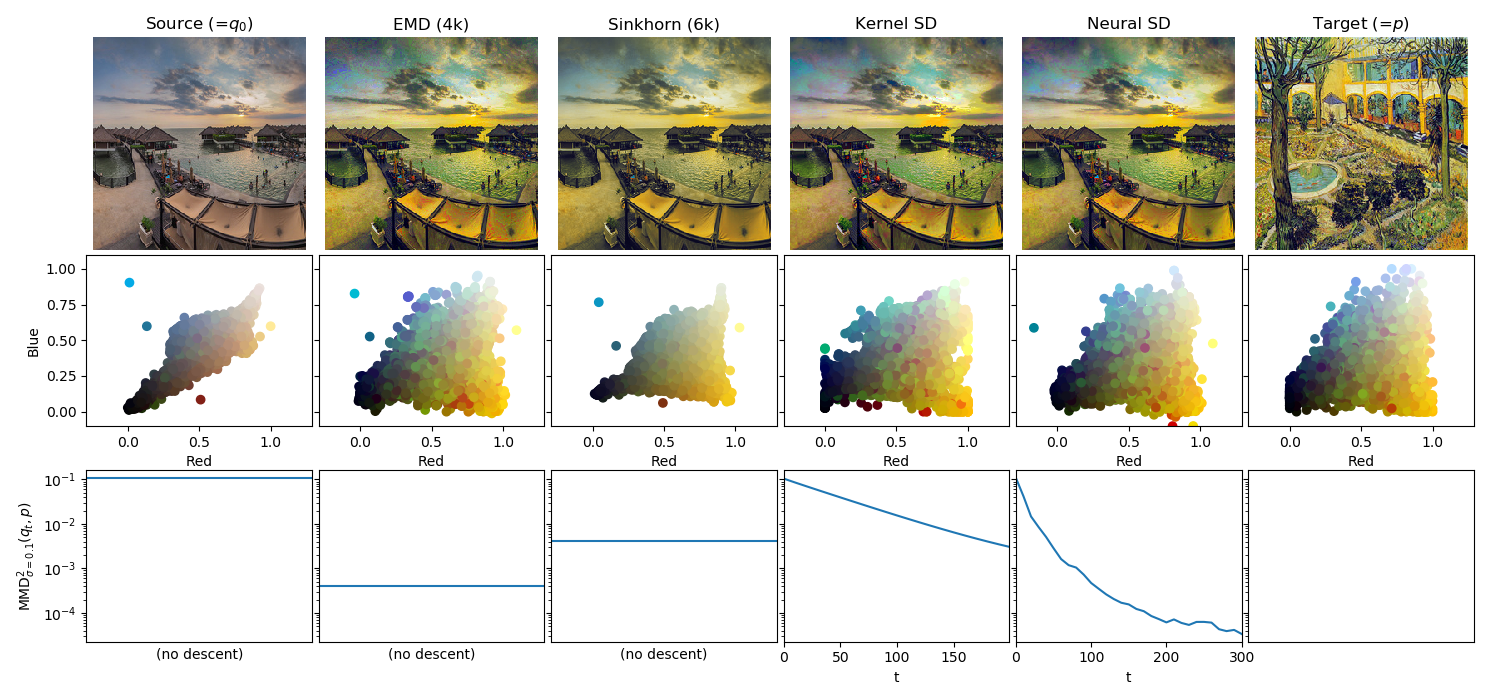}
\vskip -0.1in 
\caption{
  Color transfer.
  We compare Earth Mover Distance solved with linear programming on 4k samples, Sinkhorn on 6k samples with regularization $\varepsilon=1e^{-2}$,
  Kernel SD with $\lambda=1e^{-2}$ and $\sigma=0.1$ at $t=200$, and Neural SD at $t=300$.
  The bottom row shows progress during the descent by computing the MMD$(\nu_{q_t},\nu_p)$ with bandwidth $\sigma=0.1$ using 300 random Fourier features.
  Neural descent has a clear computational advantage over OT alternatives, which alleviates the need for subsampling and out of sample interpolation
  (which explains the high MMD values even for EMD).
}
\label{fig:coloring}
\vskip -0.1in 
\end{figure}

\begin{figure*}[ht!]
\hfill
\begin{minipage}{.35\textwidth}
\centering
\includegraphics[width=\textwidth]{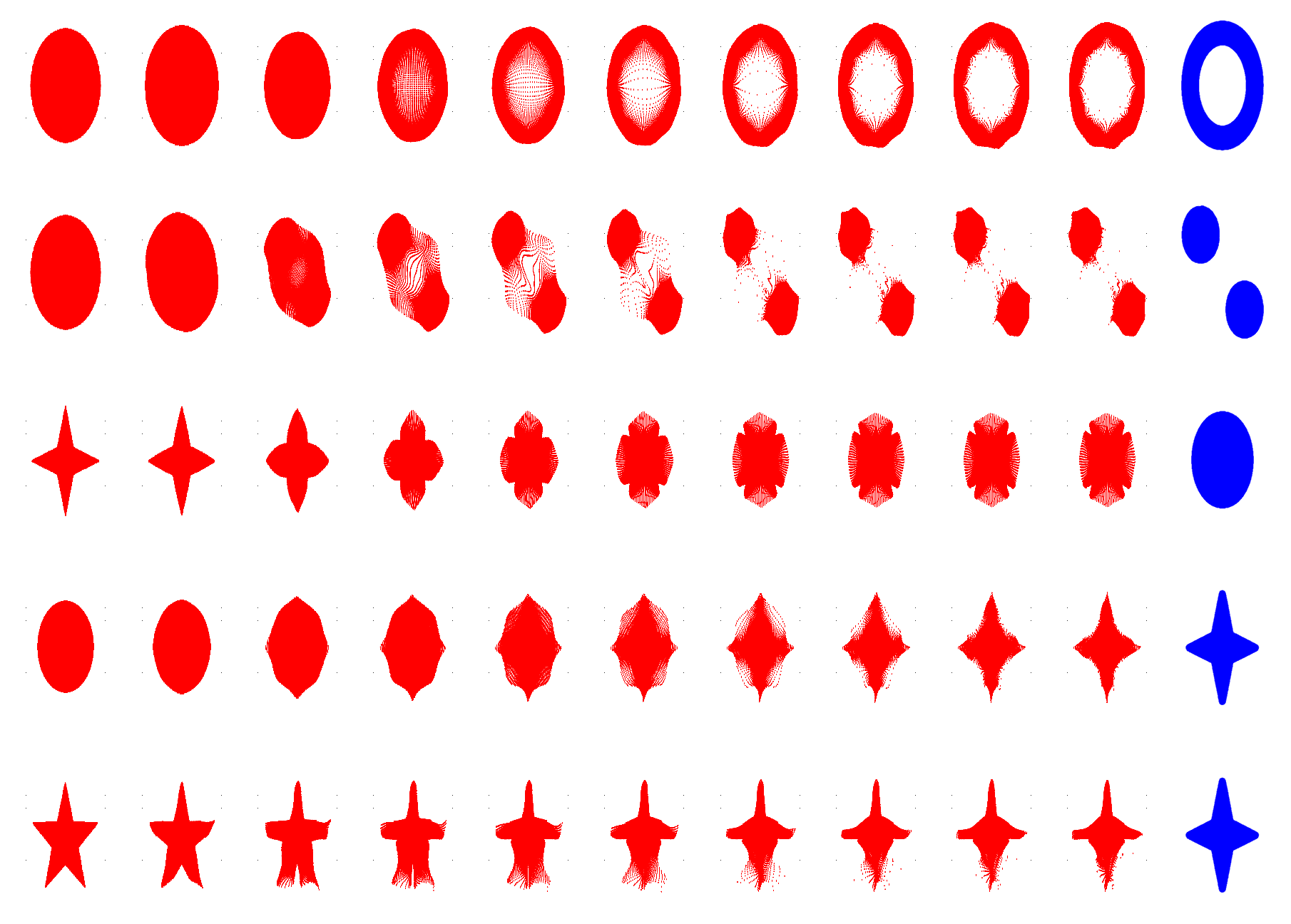}
\captionof{figure}{Morphing between several shapes using Kernelized Sobolev Descent.
Intermediate steps are intermediate particles states of the Sobolev descent.
Last column in the output of Kernelized Sobolev Descent.}
\label{fig:morphing_ksd}
\end{minipage} \quad\quad\quad \hfill
\begin{minipage}{.45\textwidth}
\centering
\includegraphics[height=0.7\textwidth]{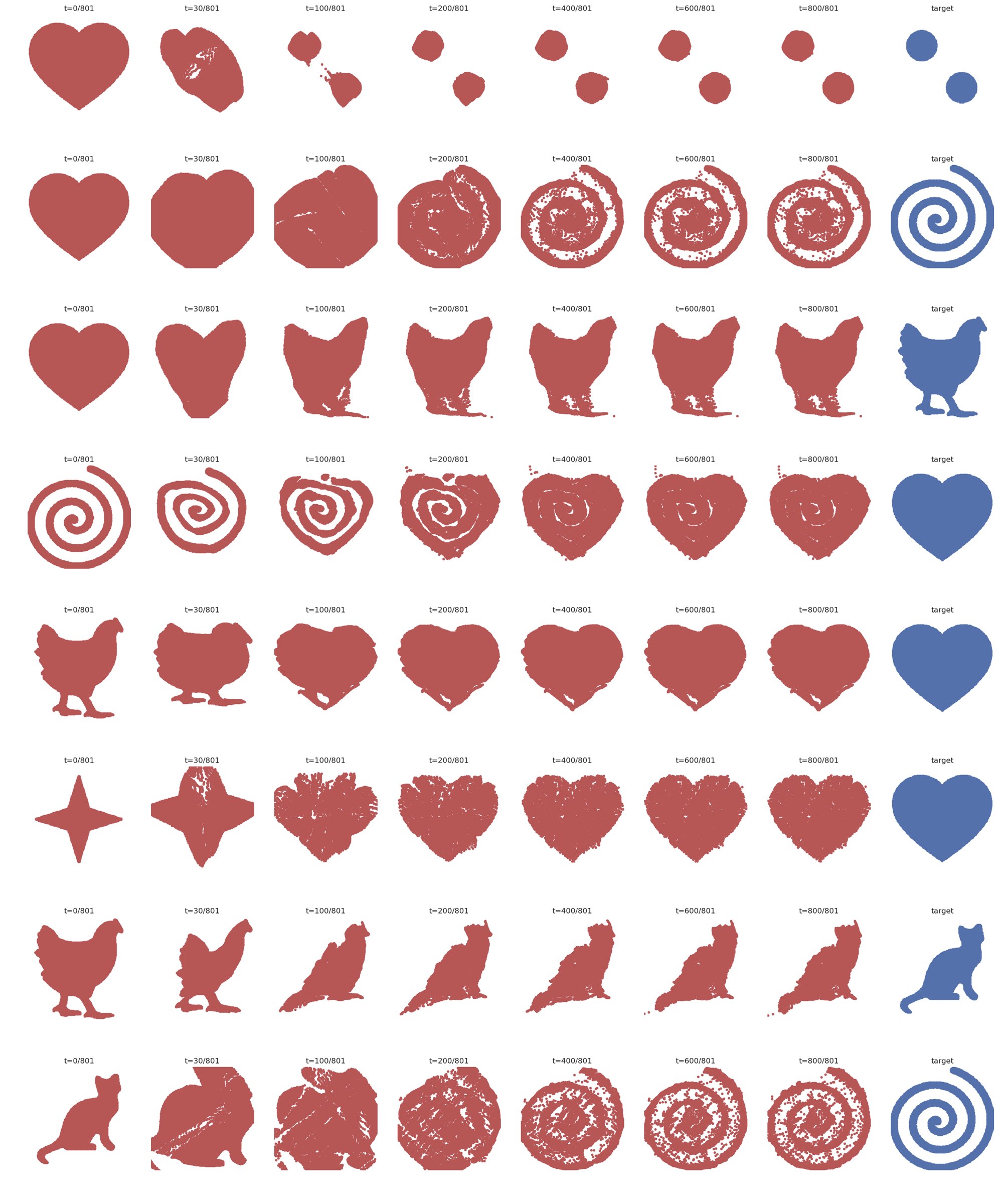}
\captionof{figure}{Morphing between several shapes using Neural Sobolev Descent.
The descent is performed using a critic modeled by a simple 3-layer MLP.
}
\label{fig:morphing_nsd}
\end{minipage}
\vskip -0.25in
\end{figure*}
\begin{figure}[ht!]
\centering
\begin{minipage}{.4\textwidth}
\centering
\includegraphics[width=0.5\textwidth]{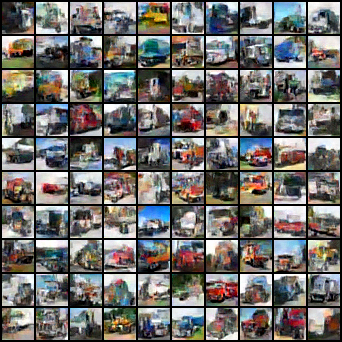}
\captionof{figure}{Particles (Images) of Neural Sobolev Descent at convergence, when the target distribution is the trucks class of CIFAR 10 and the Sobolev critic is a learned CNN. }
\label{fig:sampleDescent}
\end{minipage}
\begin{minipage}{.5\textwidth}
\centering
\includegraphics[width=0.5\textwidth]{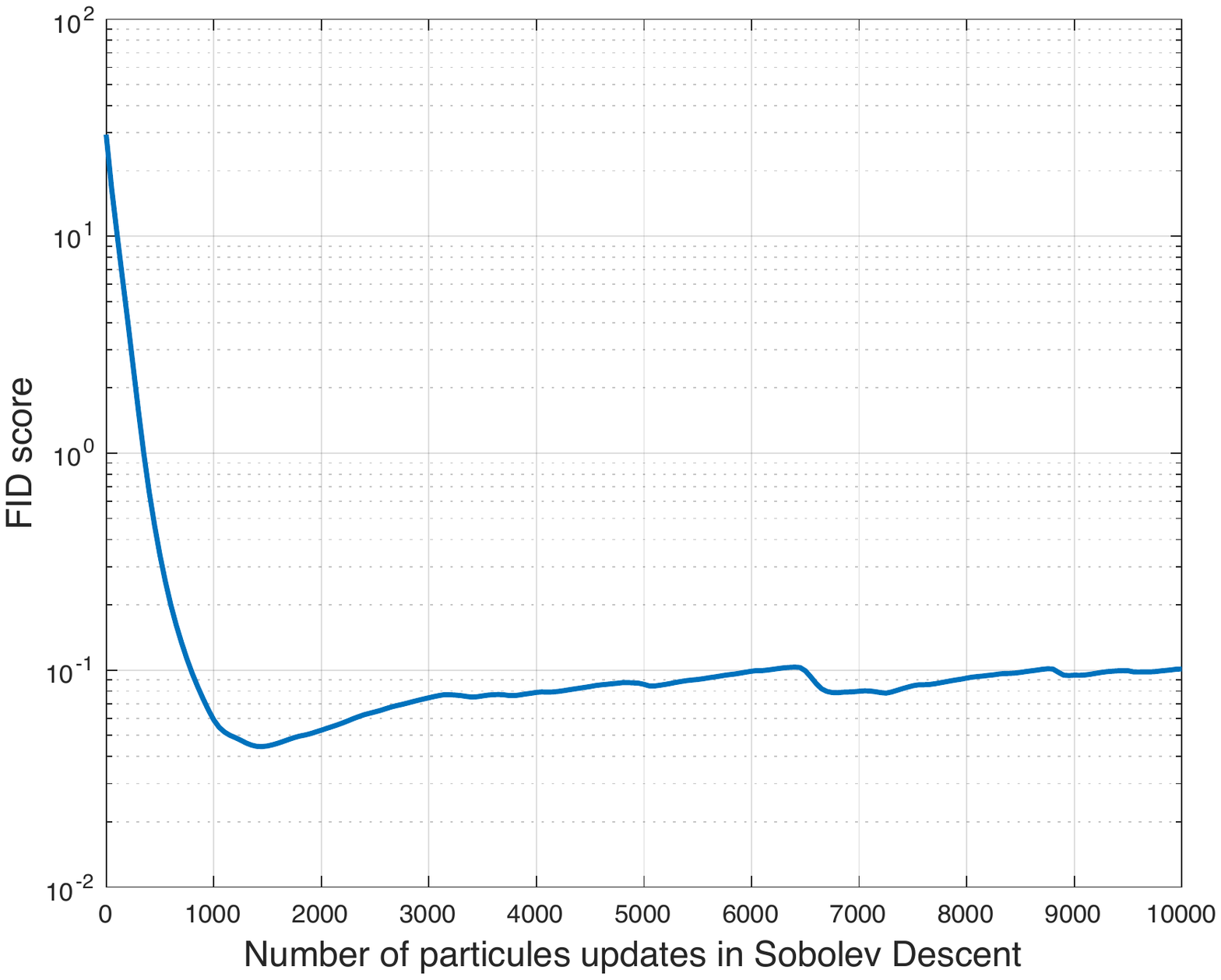}
\captionof{figure}{Frechet Inception Score (FID) of the particles produced by neural Sobolev descent as the descent progresses. FID is computed using the features from the second max pooling layer of the Inception v3 net (192-dim), by comparison against the truck class.}
\label{fig:FID}
\end{minipage}
\vskip -0.1in 
\label{fig:trucks}
\vskip -0.2in 
\end{figure}

\textbf{Image Color Transfer.} 
We consider the task of image color manipulation where we would like for an image $A$ to match the color distribution of an image $B$.
More formally, consider colored images Source and Target which we see as defining 3-dimensional probability distributions $\nu_{q_0}$ and $\nu_p$, where every pixel is a sample:
$\{x_1, x_2, \cdots x_N\} \sim \nu_{q_0}$ and $\{y_1, y_2, \cdots y_N\} \sim \nu_p$ and $N=256\times256=66k$ the resolution.
We move the samples using Kernel Sobolev Descent and Neural Sobolev Descent and analyze the distributions $q_t$.
We provide in Figure \ref{fig:coloring} the results of our proposed algorithm on the task of image color transfer,
comparing against results obtained with static Optimal Transport\footnote{
We follow the recipe of \cite{ferradans2013regularized} as implemented in the POT library \cite{flamary2017pot} where we subsample for computational feasibility, then use interpolation for out-of-sample points.}.
We show scatter plots after subsampling 5k points at random and display them on the (R,B) channels.
In Appendix \ref{app:exp_additional} Figures \ref{fig:coloring_1_sigma} and \ref{fig:coloring_1_timetrace} we show
the final MMD in function of rbf bandwidth $\sigma$ and the evolution of the $q_t$ distribution during the descent.

\textbf{Shape Morphing with Sobolev Descent.}

We use Sobolev descent for morphing between shapes.
The source distribution is the distribution of points $x\in\R^2$ sampled uniformly from a shape $A$, that we need to move to become shape $B$.
Such type of morphing has been considered in the Wasserstein Barycenter framework \cite{Solomon2015,CuturiD14}.
Figure \ref{fig:morphing_ksd} shows the result of Kernelized Sobolev Descent (Algorithm \ref{alg:KSD}) transforming between a source shape $\nu_q$ and a target shape $\nu_p$,
using random fourier features for $m=100$ and $L=600$, $\varepsilon=0.01$ and $\lambda=0.01$.
We see that Kernelized Sobolev Descent morphs the shapes as the number of iterations approaches $L=600$. Figure \ref{fig:morphing_nsd} shows Neural Sobolev Descent morphing between source shapes and target shapes.
The first column is the source shape and last column is the target shape, in between columns are intermediate outputs of the Neural Sobolev Descent.
Neural Sobolev Descent converges even on complex and unrelated shapes.
Appendix \ref{app:exp_morphing} provides the implementation and training details, and
visualizes the critic $f_\xi(x)$ during the descent (Figure \ref{fig:morphing_nsd_quivers}).
Code is available on \url{https://bit.ly/2GtWXsY}. Videos of shapes morphing are available on \url{https://goo.gl/X4o8v6}.

\textbf{High Dimensional Experiments: Transporting Noise to Images.}
We use neural Sobolev descent to transport uniform noise to the 5000 images in CIFAR10 labeled truck, similar to a typical GAN setup.
The Sobolev critic architecture is a DCGAN discriminator architecture \cite{dcgan}. We see in Fig \ref{fig:sampleDescent} that Sobolev descent converges and produces samples similar to the images from a trained GAN.
The FID score \cite{heusel2017gans} along the descent is given in Fig \ref{fig:FID}. This experiment confirms qualitatively and quantitatively our theoretical findings on Sobolev descent as a simplified proxy for GANs.
\section{Conclusion}
\vskip -0.19in
We introduced Sobolev descent on particles as a simplified proxy to GAN training.
Sobolev descent constructs paths of distributions which minimize a kinetic energy, similar to dynamical Optimal Transport.
We highlighted its convergence, its capacity in modeling high dimensional distributions and the crucial role of regularization in obtaining smooth transition paths by filtering out high frequency gradients.
Our work sheds light on gradient based learning of GANs such as Sobolev GAN \cite{SobolevGAN},
that can be seen as a dynamic transport rather than the static as popularized by WGAN \cite{WGAN}.
Our analysis explains GAN stabilization  through early stopping (small updates of critic) \cite{arjovsky2017towards,fedus2017many}  as a regularization on the critic,  inducing smoother paths to equilibrium.


\bibliography{refs,simplex}
 \bibliographystyle{unsrt}

\appendix
\onecolumn
\begin{center}
\large{\textbf{Sobolev Descent: Supplementary Material}} 
\end{center}

\section{Background on Sobolev Discrepancy}
Recently \cite{mroueh2018kernelsobdiscrepTechRep} showed that  Sobolev Discrepancy is rooted in the optimal transport literature and is known as the \emph{homegeonous weighted negative Sobolev norm} $\nor{.}_{\dot{H}^{-1}(\nu_q)}$, \cite{Villani,peyre2016comparison,peyre2017computational}.
Indeed the weighted negative Sobolev norm is defined as follows, for a any signed measure $\chi$:
 \vskip -0.2in 
$$\nor{\chi}_{\dot{H}^{-1}(\nu_q)}=\sup_{f}\Big\{\left|\int_{\pazocal{X}} f(x)d\chi(x)\right| \Big\}$$
$$ \text{s.t } f \in W^{1,2}_0(\pazocal{X},\nu_{q}), 
 \underset{x\sim \nu_{q}}{\mathbb{E}}\nor{\nabla_xf(x) }^2 \leq 1
.$$
As pointed out in \cite{mroueh2018kernelsobdiscrepTechRep} it is easy to see that:
$\pazocal{S}(\nu_{p},\nu_{q})= \nor{\nu_{p}-\nu_{q}}_{\dot{H}^{-1}(\nu_q)}$.
The norm $\nor{.}_{\dot{H}^{-1}(\nu_q)}$ plays a fundamental role in dynamic optimal transport \cite{dynamicTransport} since it linearizes the Wasserstein $W_2$ distance:
$W_2(\nu_{q},\nu_{q}+\varepsilon \chi)= \varepsilon \nor{\chi}_{\dot{H}^{-1}(\nu_q)}+o(\varepsilon).$
For more details on the Sobolev Discrepancy and its connection to optimal transport we refer the reader to \cite{mroueh2018kernelsobdiscrepTechRep} and references there in.

\section{Theory of Sobolev Descent in Finite dimensional RKHS }\label{app:Proofs}
\textbf{MMD as a Functional over probabilities and its First variation.} In order to characterize the variation in the MMD distance under small perturbations of the source distribution, we think of the MMD as a functional over the probability space $\mathcal{P}(\pazocal{X})$. 
The following definition of the first variation of functionals over Probability is a fundamental tool in our analysis.
The reader is referred to \cite[Chapter 7]{santambrogio2015optimal} for more context on first variations and gradient flows in optimal transport.

\begin{definition}[First variation of Functionals over Probability]
We shall fix in the following a measure $\nu_p$ and perturb $\nu_q$ with a perturbation $\chi$ so that $\nu_q+\varepsilon \chi$ belongs to $\mathcal{P}(\pazocal{X})$ for small $\varepsilon$ (We have necessarly $\int d\chi=0$).
Let $F$ be a functional: $\mathcal{P}(\pazocal{X})\times \mathcal{P}(\pazocal{X})\to \mathbb{R}^+$.
We treat $F(\nu_p,\nu_q)$, as a functional over probability in its second argument and compute its first variation as follows:
$$\frac{d}{d\varepsilon}F(\nu_p,\nu_q+\varepsilon \chi)\Big|_{\varepsilon =0}=\lim_{\varepsilon \to 0} \frac{F(\nu_p,\nu_q+\varepsilon \chi)-F(\nu_p,\nu_q) }{\varepsilon}:= \int \frac{\delta F }{\delta \nu_q}\left(\nu_p,\nu_q\right)d\chi$$
\end{definition}

\begin{proposition}[Perturbation of the MMD] Let $\varphi^*_{p,q}= \frac{\boldsymbol{\mu}(\nu_p)-\boldsymbol{\mu}(\nu_q)}{\nor{\boldsymbol{\mu}(\nu_p)-\boldsymbol{\mu}(\nu_q)}_{\mathcal{H}}}$ the witness function of the MMD distance:
$$\text{MMD}(\nu_p,\nu_q)=\sup_{\varphi \in \mathcal{H},\nor{\varphi}_{\mathcal{H}}\leq 1} \int_{\pazocal{X}}\varphi d(\nu_p-\nu_q).$$
We have the following first variation result:
$$\frac{d}{d\varepsilon}\text{MMD}(\nu_p,\nu_q+\varepsilon\chi)\Big|_{\varepsilon=0}=-\int \varphi^*_{p,q}d\chi. $$
\end{proposition}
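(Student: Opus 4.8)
The plan is to reduce everything to differentiating a Euclidean norm, exploiting two facts available in the finite dimensional RKHS setting: that the MMD is \emph{exactly} the $\mathbb{R}^m$ norm of the difference of kernel mean embeddings, $\text{MMD}(\nu_p,\nu_q)=\nor{\boldsymbol{\mu}(\nu_p)-\boldsymbol{\mu}(\nu_q)}$, and that the embedding $\boldsymbol{\mu}(\nu)=\int \Phi\,d\nu$ is \emph{linear} in the measure. First I would set $\boldsymbol{\delta}=\boldsymbol{\mu}(\nu_p)-\boldsymbol{\mu}(\nu_q)$ and, using linearity of $\boldsymbol{\mu}$, write $\boldsymbol{\mu}(\nu_q+\varepsilon\chi)=\boldsymbol{\mu}(\nu_q)+\varepsilon\boldsymbol{c}$ where $\boldsymbol{c}=\int_{\pazocal{X}}\Phi(x)\,d\chi(x)\in\mathbb{R}^m$. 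This immediately gives $\text{MMD}(\nu_p,\nu_q+\varepsilon\chi)=\nor{\boldsymbol{\delta}-\varepsilon\boldsymbol{c}}$, turning the first variation into an elementary one-variable derivative.

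Next I would differentiate the scalar map $g(\varepsilon)=\nor{\boldsymbol{\delta}-\varepsilon\boldsymbol{c}}=\sqrt{\nor{\boldsymbol{\delta}}^2-2\varepsilon\scalT{\boldsymbol{\delta}}{\boldsymbol{c}}+\varepsilon^2\nor{\boldsymbol{c}}^2}$ at $\varepsilon=0$. Differentiating $g^2$ gives $2g(0)g'(0)=-2\scalT{\boldsymbol{\delta}}{\boldsymbol{c}}$, and since $g(0)=\nor{\boldsymbol{\delta}}$ this yields $g'(0)=-\scalT{\boldsymbol{\delta}}{\boldsymbol{c}}/\nor{\boldsymbol{\delta}}$. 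Finally I would identify this with the claimed integral through the reproducing property: because $\varphi^*_{p,q}=\boldsymbol{\delta}/\nor{\boldsymbol{\delta}}$ and $\varphi^*_{p,q}(x)=\scalT{\varphi^*_{p,q}}{\Phi(x)}$, linearity of the inner product lets me pull it inside the integral, $\scalT{\boldsymbol{\delta}/\nor{\boldsymbol{\delta}}}{\boldsymbol{c}}=\scalT{\varphi^*_{p,q}}{\int\Phi\,d\chi}=\int\scalT{\varphi^*_{p,q}}{\Phi(x)}\,d\chi(x)=\int\varphi^*_{p,q}\,d\chi$. Combining the two displays gives exactly $g'(0)=-\int\varphi^*_{p,q}\,d\chi$, which is the assertion.

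The main obstacle is the differentiability of the norm at $\varepsilon=0$: the map $\boldsymbol{v}\mapsto\nor{\boldsymbol{v}}$ has a corner at the origin, so $g$ is differentiable at $0$ only when $\boldsymbol{\delta}\neq 0$, i.e.\ $\text{MMD}(\nu_p,\nu_q)>0$. This nondegeneracy is in fact implicit in the statement, since $\varphi^*_{p,q}$ carries $\nor{\boldsymbol{\delta}}$ in its denominator; I would simply record it as a standing hypothesis. A secondary, purely routine point is justifying the interchange of the $\mathbb{R}^m$ inner product with the $\chi$-integral, which in finite dimensions is just a finite sum of scalar integrals $\int\Phi_j\,d\chi$, each finite because $\Phi$ is bounded by assumption (A1). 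Everything else is elementary calculus, so I expect the proof to be short once the linearity of $\boldsymbol{\mu}$ and the reduction to $g(\varepsilon)$ are in place.
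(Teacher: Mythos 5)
Your proof is correct, but it takes a different route from the paper's. The paper disposes of this proposition in one line by citing Proposition 7.16 of Santambrogio's \emph{Optimal Transport for Applied Mathematicians}, which is an envelope-theorem-type result: for a functional defined as a supremum of linear functionals $\chi\mapsto\int\varphi\,d\chi$ over a fixed convex set, the first variation is given by evaluating the optimizer $\varphi^*_{p,q}$ against the perturbation. You instead bypass the supremum entirely and differentiate the closed form $\text{MMD}(\nu_p,\nu_q)=\nor{\boldsymbol{\mu}(\nu_p)-\boldsymbol{\mu}(\nu_q)}$ directly, using linearity of the mean embedding and elementary calculus on the Euclidean norm. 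Your computation is sound and is essentially the square-root version of what the paper itself does one result later for $\text{MMD}^2$ (the lemma ``Perturbation of $\text{MMD}^2$'' expands $\nor{\boldsymbol{\delta}-\varepsilon\boldsymbol{\mu}(\chi)}^2$ in exactly the same way, without the corner issue). What your approach buys is self-containedness and an explicit identification of the hypothesis $\boldsymbol{\mu}(\nu_p)\neq\boldsymbol{\mu}(\nu_q)$ needed for differentiability of the norm at $\varepsilon=0$ --- a condition the paper leaves implicit in the definition of $\varphi^*_{p,q}$; it also transfers verbatim to the infinite-dimensional RKHS setting of the appendix by replacing $\mathbb{R}^m$ with $\mathcal{H}$. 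What the cited route buys is generality: it applies to suprema of linear functionals that need not admit a closed-form norm expression, at the cost of outsourcing the differentiability argument to the reference.
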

\begin{proof} This result is direction application of Proposition 7.16 in Chapter 7 of Optimal Transport for applied Mathematicians book.
\end{proof}

\begin{lemma}[Perturbation of \text{MMD}$^2$] We have the following first variation for the MMD distance:
$$\frac{d}{d\varepsilon}\text{MMD}^2(\nu_p,\nu_q+\varepsilon\chi)\Big|_{\varepsilon=0}=-2\int \delta_{p,q}(x)d\chi(x), $$
where $\delta_{p,q}(x)=\scalT{\boldsymbol{\mu}(\nu_p)-\boldsymbol{\mu}(\nu_q)}{\Phi(x)}.$
\end{lemma}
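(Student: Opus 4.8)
The plan is to exploit the fact that $\text{MMD}^2$ is, by the reproducing property, simply the squared RKHS norm of the difference of kernel mean embeddings, so that it is an \emph{exactly quadratic} functional of the measure in its second argument. This makes the first variation an elementary computation and, as a bonus, avoids the division by $\text{MMD}(\nu_p,\nu_q)$ that would arise if one instead differentiated the square of the preceding Proposition (and which is ill-defined at coincidence points where $\boldsymbol{\mu}(\nu_p)=\boldsymbol{\mu}(\nu_q)$).

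First I would use that the kernel mean embedding is \emph{linear} in its measure argument, so that for the admissible perturbation $\nu_q+\varepsilon\chi$ (with $\int d\chi = 0$) one has $\boldsymbol{\mu}(\nu_q+\varepsilon\chi)=\boldsymbol{\mu}(\nu_q)+\varepsilon\int\Phi(x)\,d\chi(x)$. Writing $\boldsymbol{\delta}=\boldsymbol{\mu}(\nu_p)-\boldsymbol{\mu}(\nu_q)$ and $\boldsymbol{c}=\int\Phi(x)\,d\chi(x)$, this gives $\text{MMD}^2(\nu_p,\nu_q+\varepsilon\chi)=\nor{\boldsymbol{\delta}-\varepsilon\boldsymbol{c}}^2$. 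Expanding the quadratic yields $\nor{\boldsymbol{\delta}}^2-2\varepsilon\scalT{\boldsymbol{\delta}}{\boldsymbol{c}}+\varepsilon^2\nor{\boldsymbol{c}}^2$, and differentiating in $\varepsilon$ and setting $\varepsilon=0$ annihilates the quadratic term, leaving $-2\scalT{\boldsymbol{\delta}}{\boldsymbol{c}}$.

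The final step is to pull the inner product inside the integral, which is legitimate by linearity (and boundedness of $\Phi$ under assumption A1), to recognize $\scalT{\boldsymbol{\delta}}{\int\Phi(x)\,d\chi(x)}=\int\scalT{\boldsymbol{\delta}}{\Phi(x)}\,d\chi(x)=\int\delta_{p,q}(x)\,d\chi(x)$, using the definition $\delta_{p,q}(x)=\scalT{\boldsymbol{\mu}(\nu_p)-\boldsymbol{\mu}(\nu_q)}{\Phi(x)}$. This produces the claimed identity $-2\int\delta_{p,q}(x)\,d\chi(x)$.

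An alternative route, presumably the one matching the surrounding development, is to apply the chain rule to the preceding Proposition: $\frac{d}{d\varepsilon}\text{MMD}^2=2\,\text{MMD}(\nu_p,\nu_q)\,\frac{d}{d\varepsilon}\text{MMD}=-2\,\text{MMD}(\nu_p,\nu_q)\int\varphi^*_{p,q}\,d\chi$, and then substitute $\varphi^*_{p,q}(x)=\delta_{p,q}(x)/\text{MMD}(\nu_p,\nu_q)$ so that the normalizing factors cancel. There is no genuine obstacle in either approach; the only point requiring care is the degenerate case $\boldsymbol{\mu}(\nu_p)=\boldsymbol{\mu}(\nu_q)$, where the witness $\varphi^*_{p,q}$ is undefined and the chain-rule argument breaks down, whereas the direct quadratic expansion remains valid and returns $0$ on both sides. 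For this reason I would favor the direct computation.
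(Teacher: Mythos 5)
Your proposal is correct and is essentially identical to the paper's proof: both exploit the linearity of the kernel mean embedding in the measure argument to write $\text{MMD}^2(\nu_p,\nu_q+\varepsilon\chi)=\nor{\boldsymbol{\mu}(\nu_p)-\boldsymbol{\mu}(\nu_q)-\varepsilon\boldsymbol{\mu}(\chi)}^2$, expand the quadratic, and read off the coefficient of $\varepsilon$. Your remark that the direct quadratic expansion is preferable to the chain-rule route (which degenerates when $\boldsymbol{\mu}(\nu_p)=\boldsymbol{\mu}(\nu_q)$) is a sensible observation, and the paper indeed takes the direct route.
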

\begin{proof}
$$\text{MMD}^2(\nu_p,\nu_q)=\nor{\boldsymbol{\mu}(\nu_p)-\boldsymbol{\mu}(\nu_q)}^2$$
We extend the kernel mean embedding definition here to signed measures $\chi$ and note:
$$\boldsymbol{\mu}(\chi)=\int_{\pazocal{X}}\Phi(x) d\chi(x).$$
\begin{align*}
\text{MMD}^2(\nu_p,\nu_q+\varepsilon \chi )&=\nor{\boldsymbol{\mu}(\nu_p)-\boldsymbol{\mu}(\nu_q)-\varepsilon \boldsymbol{\mu}(\chi)}^2\\
&=\nor{\boldsymbol{\mu}(\nu_p)-\boldsymbol{\mu}(\nu_q)}^2-2\varepsilon \scalT{\boldsymbol{\mu}(\nu_p)-\boldsymbol{\mu}(\nu_q)}{\boldsymbol{\mu}(\chi)}+\varepsilon^2 \nor{\boldsymbol{\mu}(\chi)}^2\\
&=\text{MMD}^2(\nu_p,\nu_q)-2\varepsilon \scalT{\boldsymbol{\delta}_{p,q}}{\boldsymbol{\mu}(\chi)}+\varepsilon^2 \nor{\boldsymbol{\mu}(\chi)}^2\\
&=\text{MMD}^2(\nu_p,\nu_q)-2\varepsilon \int_{\pazocal{X}} \delta_{p,q}(x)d\chi(x) +\varepsilon^2 \nor{\boldsymbol{\mu}(\chi)}^2,\\
\end{align*}
where we noted, $\boldsymbol{\delta}_{p,q}=\boldsymbol{\mu}(\nu_p)-\boldsymbol{\mu}(\nu_q) $ and $\delta_{p,q}(x)=\scalT{\boldsymbol{\delta}_{p,q}}{\Phi(x)}.$
It follows that:
$$\frac{\text{MMD}^2(\nu_p,\nu_q+\varepsilon \chi )-\text{MMD}^2(\nu_p,\nu_q)}{\varepsilon}=-2\int_{\pazocal{X}} \delta_{p,q}(x)d\chi(x) + \varepsilon \nor{\boldsymbol{\mu}(\chi)}^2_{\mathcal{H}}, $$
Taking the limit $\varepsilon \to 0$, we obtain:
$$\frac{d}{d\varepsilon}\text{MMD}^2(\nu_p,\nu_q+\varepsilon\chi)\Big|_{\varepsilon=0}=-2\int_{\pazocal{X}} \delta_{p,q}(x)d\chi(x)$$
\end{proof}

 Let $\psi \in \mathcal{H}$, following \cite{steindescent} we consider infinitesimal transport maps: $T^\varepsilon(x)=x +\varepsilon \nabla_x \psi(x) ,  x \sim \nu_q$.
Let $q$ be the density of $X$ we are interested in the density $q_{T^\varepsilon}$ of $T^\varepsilon(X)$ as $\varepsilon \to 0$.
Consider $\varepsilon$ small so that $\nabla T^\varepsilon(x)= I+ \varepsilon H\psi(x)$ is positive definite, where $H$ is the hessian matrix of $\psi$ (i.e $\varepsilon < \sup_{x\in \pazocal{X}}\frac{1}{ | \lambda_{max}(H\psi(x))|}$).
Therefore we have:
$(T^\varepsilon)^{-1}(x)=x-\varepsilon \nabla_x \psi(x)+o(\varepsilon).$
A first order expansion gives us :
\begin{eqnarray*}
 q_{[T^\varepsilon]}(x)&=&q((T^\varepsilon)^{-1}(x))\text{det}(\nabla_x (T^\varepsilon)^{-1}(x))\\
&=& (q(x)-\varepsilon\scalT{\nabla_x q(x)}{\nabla_x \psi(x)} )\text{det}(I-\varepsilon\nabla^2_x \psi(x))+o(\varepsilon)\\
&=&(q(x)-\varepsilon\scalT{\nabla_x q(x)}{\nabla_x \psi(x)} )(1- trace(\varepsilon\nabla^2_x \psi(x)))+o(\varepsilon)\\
&=& q(x)- \varepsilon(\scalT{\nabla_x q(x)}{\nabla_x\psi(x)}+ q(x)\Delta \psi(x)) +o(\varepsilon)\\
&=& q(x)- \varepsilon( div(q(x)\nabla_x \psi(x)))+o(\varepsilon)
\end{eqnarray*}
Hence we are interested in perturbation of the form $d\chi(x)=-div(q(x)\nabla_x \psi(x))dx$, since it is the first order variation of the density as 
we transport points distributed as $\nu_q$ using the infinitesimal transport map $T^\varepsilon$, for small $\varepsilon$.
Note that $\int_{\pazocal{X}}d\chi(x)=0$.

%

\begin{theorem}[(Thm \ref{theo:PertubationMMDSobolev} restated)]Let $\lambda>0$.
  Let $u^{\lambda}_{p,q}$ the unnormalized solution of the regularized Kernel Sobolev discrepancy between $\nu_p$ and $\nu_q$ i.e $\boldsymbol{u}^{\lambda}_{p,q}=(D(\nu_q)+\lambda I)^{-1}(\boldsymbol{\mu}(\nu_p)-\boldsymbol{\mu}(\nu_q)).$
  Consider $d\chi_{u}(x)=-div(q(x)\nabla_x u^{\lambda}_{p,q}(x))dx$, i.e corresponding to the infinitesimal transport of $\nu_q$ via $T^{\varepsilon}(x)=x+\varepsilon \nabla_x u^{\lambda}_{p,q}(x)$. 
We have the following first variation of the $\text{MMD}^2$ under this particular perturbation:
$$\frac{d}{d\varepsilon}\text{MMD}^2(\nu_p,\nu_q+\varepsilon\chi_{u})\Big|_{\varepsilon=0}=- 2\left(\text{MMD}^2(\nu_p,\nu_q)-\lambda \pazocal{S}^2_{\mathcal{H},\lambda}(\nu_p,\nu_q)\right)\leq 0. $$
\end{theorem}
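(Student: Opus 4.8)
The plan is to combine the first-variation formula for $\text{MMD}^2$ (the Lemma just proved) with the closed-form characterization of the optimal Sobolev critic $u^{\lambda}_{p,q}$ from the Proposition. By the Lemma, for any admissible perturbation $\chi$ we have
\begin{equation*}
\frac{d}{d\varepsilon}\text{MMD}^2(\nu_p,\nu_q+\varepsilon\chi)\Big|_{\varepsilon=0}=-2\int_{\pazocal{X}}\delta_{p,q}(x)\,d\chi(x),
\end{equation*}
where $\delta_{p,q}(x)=\scalT{\boldsymbol{\mu}(\nu_p)-\boldsymbol{\mu}(\nu_q)}{\Phi(x)}$. So the entire computation reduces to evaluating this single integral for the specific choice $d\chi_u(x)=-div\big(q(x)\nabla_x u^{\lambda}_{p,q}(x)\big)\,dx$ arising from the transport map $T^{\varepsilon}(x)=x+\varepsilon\nabla_x u^{\lambda}_{p,q}(x)$.

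First I would substitute $d\chi_u$ into the integral and integrate by parts. Using the divergence theorem together with the boundary condition (assumption A3, or its weakened form in the Remark, guaranteeing $q(x)\scalT{\nabla_x u^{\lambda}_{p,q}(x)}{n(x)}=0$ on $\partial\pazocal{X}$), the boundary term vanishes and we get
\begin{equation*}
-2\int_{\pazocal{X}}\delta_{p,q}(x)\,d\chi_u(x)
=-2\int_{\pazocal{X}}\scalT{\nabla_x\delta_{p,q}(x)}{\nabla_x u^{\lambda}_{p,q}(x)}\,q(x)\,dx.
\end{equation*}
Next I would express both gradients through the RKHS structure. Since $\delta_{p,q}(x)=\scalT{\boldsymbol{\delta}_{p,q}}{\Phi(x)}$ with $\boldsymbol{\delta}_{p,q}=\boldsymbol{\mu}(\nu_p)-\boldsymbol{\mu}(\nu_q)$, we have $\nabla_x\delta_{p,q}(x)=J\Phi(x)\boldsymbol{\delta}_{p,q}$, and likewise $\nabla_x u^{\lambda}_{p,q}(x)=J\Phi(x)\boldsymbol{u}^{\lambda}_{p,q}$. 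The integrand becomes $\scalT{\boldsymbol{\delta}_{p,q}}{[J\Phi(x)]^{\top}J\Phi(x)\,\boldsymbol{u}^{\lambda}_{p,q}}$, and integrating against $q(x)\,dx$ pulls out exactly the operator $D(\nu_q)=\mathbb{E}_{x\sim\nu_q}\big([J\Phi(x)]^{\top}J\Phi(x)\big)$ from Eq.~\eqref{eq:D}. This collapses the whole expression to $-2\,\scalT{\boldsymbol{\delta}_{p,q}}{D(\nu_q)\boldsymbol{u}^{\lambda}_{p,q}}$.

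The final step is purely algebraic: I would use the defining equation of the optimal witness, $(D(\nu_q)+\lambda I)\boldsymbol{u}^{\lambda}_{p,q}=\boldsymbol{\delta}_{p,q}$, to rewrite $D(\nu_q)\boldsymbol{u}^{\lambda}_{p,q}=\boldsymbol{\delta}_{p,q}-\lambda\boldsymbol{u}^{\lambda}_{p,q}$. Then
\begin{equation*}
\scalT{\boldsymbol{\delta}_{p,q}}{D(\nu_q)\boldsymbol{u}^{\lambda}_{p,q}}
=\nor{\boldsymbol{\delta}_{p,q}}^2-\lambda\scalT{\boldsymbol{\delta}_{p,q}}{\boldsymbol{u}^{\lambda}_{p,q}}
=\text{MMD}^2(\nu_p,\nu_q)-\lambda\,\pazocal{S}^2_{\mathcal{H},\lambda}(\nu_p,\nu_q),
\end{equation*}
where the last equality uses $\scalT{\boldsymbol{\delta}_{p,q}}{\boldsymbol{u}^{\lambda}_{p,q}}=\scalT{\boldsymbol{\delta}_{p,q}}{(D(\nu_q)+\lambda I)^{-1}\boldsymbol{\delta}_{p,q}}=\pazocal{S}^2_{\mathcal{H},\lambda}(\nu_p,\nu_q)$, which is precisely the closed form in the Proposition. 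Multiplying by $-2$ yields the claimed identity, and the sign $\leq 0$ follows from the operator-norm bound in Remark~2, namely $\lambda\,\pazocal{S}^2_{\mathcal{H},\lambda}\leq\text{MMD}^2$. The main obstacle I anticipate is the integration-by-parts step: it requires justifying that the boundary contribution genuinely vanishes under the stated (or weakened) assumptions and that $u^{\lambda}_{p,q}$ is regular enough (bounded, differentiable gradients) for the divergence theorem to apply — this is exactly where assumptions A1–A3 on $\Phi$ are doing the work.
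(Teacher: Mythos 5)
Your proposal is correct and follows essentially the same route as the paper's proof: apply the first-variation lemma, integrate by parts using the divergence theorem with the zero boundary condition, identify $D(\nu_q)$ from the Jacobian structure, and then use $(D(\nu_q)+\lambda I)\boldsymbol{u}^{\lambda}_{p,q}=\boldsymbol{\delta}_{p,q}$ together with $\scalT{\boldsymbol{\delta}_{p,q}}{\boldsymbol{u}^{\lambda}_{p,q}}=\pazocal{S}^2_{\mathcal{H},\lambda}(\nu_p,\nu_q)$ and the operator-norm bound for the sign. Your closing remark correctly identifies where assumptions A1--A3 are used, matching the paper's treatment.
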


\begin{proof} [Proof of Corollary \ref{corr:descent}]By Theorem \ref{theo:PertubationMMDSobolev}, noting that for small $\varepsilon$ we have:
$q_{\ell+1}(x)=q_{\ell}(x)-\varepsilon div(q_{\ell}(x)\nabla_x u^{\lambda}_{p,q_{\ell}}(x)).$
\end{proof}

\begin{proof}[Proof of Theorem \ref{theo:PertubationMMDSobolev}]
\begin{align*}
\frac{1}{2}\frac{d}{d\varepsilon}\text{MMD}^2(\nu_p,\nu_q+\varepsilon\chi_u)\Big|_{\varepsilon=0}&=-\int \delta_{p,q}d\chi_u=-\int_{\pazocal{X}}\delta_{p,q}(x)(-div(q(x)\nabla_x u^{\lambda}_{p,q}(x)))dx \\
&=\int_{\pazocal{X}}\delta_{p,q}(x)div(q(x)\nabla_x u^{\lambda}_{p,q}(x))dx\\
&=-\int_{\pazocal{X}} \scalT{\nabla_x \delta_{p,q}(x)}{\nabla_x u^{\lambda}_{p,q}(x)}q(x)dx \text{ (Divergence theorem and zero boundary)}\\
&= -\int_{\pazocal{X}} \boldsymbol{\delta}_{p,q}^{\top}[J\Phi(x)]^{\top}J\Phi(x) \boldsymbol{u}^{\lambda}_{p,q} q(x)dx\\
&= -\scalT{\boldsymbol{\delta}_{p,q}}{\left( \int_{\pazocal{X}} [J\Phi(x)]^{\top}J\Phi(x) q(x) dx\right) \boldsymbol{u}^{\lambda}_{p,q}}\\
&=-\scalT{\boldsymbol{\delta}_{p,q}}{\mathbb{E}_{x\sim \nu_{q}}( [J\Phi(x)]^{\top}J\Phi(x) ) \boldsymbol{u}^{\lambda}_{p,q} }\\
&=- \scalT{\boldsymbol{\delta}_{p,q}}{D(\nu_q)\boldsymbol{u}^{\lambda}_{p,q}} \text{(by definition)}\\
&=- \scalT{\boldsymbol{\delta}_{p,q}}{\left(D(\nu_q)+ \lambda I_m -\lambda I_m\right)\boldsymbol{u}^{\lambda}_{p,q}}\\
& =- \scalT{\boldsymbol{\delta}_{p,q}}{\left(D(\nu_q)+ \lambda I_m\right)\boldsymbol{u}^{\lambda}_{p,q}}+\lambda \scalT{\boldsymbol{\delta}_{p,q}}{\boldsymbol{u}^{\lambda}_{p,q}}\\
\end{align*}
Recall that : $$\left(D(\nu_q)+ \lambda I_m\right) \boldsymbol{u}^{\lambda}_{p,q}=\boldsymbol{\delta}_{p,q},$$
and by definition the regularized Kernel Sobolev Discrepancy we have:
$$\scalT{\boldsymbol{\delta}_{p,q}}{\boldsymbol{u}^{\lambda}_{p,q}}=\pazocal{S}^2_{\mathcal{H},\lambda}(\nu_p,\nu_q),$$
Hence replacing the expressions above we obtain:
\begin{align*}
\frac{1}{2}\frac{d}{d\varepsilon}\text{MMD}^2(\nu_p,\nu_q+\varepsilon\chi)\Big|_{\varepsilon=0}&= - \scalT{\boldsymbol{\delta}_{p,q}}{\boldsymbol{\delta}_{p,q}}+ \lambda \pazocal{S}^2_{\mathcal{H},\lambda}(\nu_p,\nu_q)\\
&=-\text{MMD}^2(\nu_p,\nu_q)+ \lambda \pazocal{S}^2_{\mathcal{H},\lambda}(\nu_p,\nu_q)\\
&= -\left(\text{MMD}^2(\nu_p,\nu_q)-\lambda \pazocal{S}^2_{\mathcal{H},\lambda}(\nu_p,\nu_q)\right)
\end{align*}

Note that:
\begin{align*}
\pazocal{S}^2_{\mathcal{H},\lambda}(\nu_p,\nu_q) = \scalT{\boldsymbol{\delta}_{p,q}}{(D(\nu_q)+\lambda I)^{-1}\boldsymbol{\delta}_{p,q}}&\leq \nor{(D(\nu_q)+\lambda I)^{-1}}_{op}\nor{\boldsymbol{\delta}_{p,q}}^2\\
&\leq \frac{1}{\lambda}\text{MMD}^2(\nu_p,\nu_q),
\end{align*}
It follows that : $$\text{MMD}^2(\nu_p,\nu_q)-\lambda \pazocal{S}^2_{\mathcal{H},\lambda}(\nu_p,\nu_q) \geq 0$$
and 
$$\frac{1}{2}\frac{d}{d\varepsilon}\text{MMD}^2(\nu_p,\nu_q+\varepsilon\chi)\Big|_{\varepsilon=0}= -\left(\text{MMD}^2(\nu_p,\nu_q)-\lambda \pazocal{S}^2_{\mathcal{H},\lambda}(\nu_p,\nu_q)\right)\leq 0.$$
\end{proof}

\subsection{Proofs for Convergence of Continuous Sobolev Descent} \label{app:convReg}
 \textbf{Assumption (A)}: \emph{For any measure $\nu_q $, such that $\delta_{p,q}=\boldsymbol{\mu}(\nu_p)-\boldsymbol{\mu}(\nu_q)\neq 0$, $\delta_{p,q} \notin Null \left(  D(\nu_q)\right)$ .} 

\begin{proposition} Under assumption (A), regularized continuous Sobolev Descent for $\lambda>0$ is convergent in the MMD sense:
$$\lim_{t\to \infty}  \text{MMD}(\nu_p,\nu_{q_t})=0.$$
\label{pro:conv}
\end{proposition}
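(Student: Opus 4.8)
The plan is to rewrite the right-hand side of the $\text{MMD}^2$ flow in spectral form, use Assumption (A) to certify that it vanishes only at the target, and then promote this pointwise positivity to a \emph{uniform} lower bound by a compactness argument, from which convergence follows by contradiction.

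First I would set $g(t)=\text{MMD}^2(\nu_p,\nu_{q_t})$ and record the flow from Case 2, namely $\tfrac{1}{2}g'(t)=-\Delta_{q_t}$ with $\Delta_q:=\text{MMD}^2(\nu_p,\nu_q)-\lambda\,\pazocal{S}^2_{\mathcal{H},\lambda}(\nu_p,\nu_q)$. Writing $\boldsymbol{\delta}_{p,q}=\boldsymbol{\mu}(\nu_p)-\boldsymbol{\mu}(\nu_q)$ and using the closed form $\pazocal{S}^2_{\mathcal{H},\lambda}=\scalT{\boldsymbol{\delta}_{p,q}}{(D(\nu_q)+\lambda I_m)^{-1}\boldsymbol{\delta}_{p,q}}$, I would apply the identity $I_m-\lambda(D(\nu_q)+\lambda I_m)^{-1}=D(\nu_q)(D(\nu_q)+\lambda I_m)^{-1}$ to obtain $\Delta_q=\scalT{\boldsymbol{\delta}_{p,q}}{D(\nu_q)(D(\nu_q)+\lambda I_m)^{-1}\boldsymbol{\delta}_{p,q}}$. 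Diagonalizing $D(\nu_q)$ with eigenpairs $(\lambda_j,\boldsymbol{d_j})$ this reads $\Delta_q=\sum_{j}\frac{\lambda_j}{\lambda_j+\lambda}\abs{\scalT{\boldsymbol{\delta}_{p,q}}{\boldsymbol{d_j}}}^2\geq 0$, and it equals zero precisely when $\boldsymbol{\delta}_{p,q}\in Null(D(\nu_q))$. Hence Assumption (A) gives the clean dichotomy: $\Delta_q=0$ iff $\boldsymbol{\delta}_{p,q}=0$ iff $\text{MMD}(\nu_p,\nu_q)=0$.

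Since $g$ is nonincreasing and bounded below it has a limit $L\geq 0$, and it suffices to rule out $L>0$. Arguing by contradiction, assume $L>0$, so $g(t)\geq L$ for all $t$. I would then introduce the set of embeddings $K=\set{(\boldsymbol{\mu}(\nu_q),D(\nu_q)):\ \text{MMD}^2(\nu_p,\nu_q)\geq L}$. Because $\pazocal{X}$ is compact, $\mathcal{P}(\pazocal{X})$ is weakly compact (Prokhorov), and since $\Phi,J\Phi$ are bounded and continuous (A1,A2) the maps $\nu_q\mapsto\boldsymbol{\mu}(\nu_q)$ and $\nu_q\mapsto D(\nu_q)$ are weakly continuous; as $\set{\nu_q:\text{MMD}^2(\nu_p,\nu_q)\geq L}$ is weakly closed, $K$ is a compact subset of $\R^m\times\R^{m\times m}$. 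On $K$ the map $(\boldsymbol{\mu},D)\mapsto\Delta$ is continuous precisely because $\lambda>0$ makes $D\mapsto(D+\lambda I_m)^{-1}$ continuous. Its minimum over $K$ is attained at the embedding of some measure $\nu_{q^\star}$ with $\text{MMD}(\nu_p,\nu_{q^\star})\geq\sqrt{L}>0$, so the dichotomy above forces $c:=\min_K\Delta=\Delta_{q^\star}>0$.

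Finally, every $q_t$ has $g(t)\geq L$, so its embedding lies in $K$ and $\Delta_{q_t}\geq c$; the flow then gives $g'(t)\leq -2c$ for all $t$, whence $g(t)\leq g(0)-2ct\to-\infty$, contradicting $g\geq 0$. Therefore $L=0$, i.e. $\text{MMD}^2(\nu_p,\nu_{q_t})\to 0$, and taking square roots gives the claim. I expect the main obstacle to be exactly the promotion of the pointwise positivity supplied by Assumption (A) to the uniform bound $c>0$: this is where compactness of the reachable embeddings and the continuity afforded by $\lambda>0$ are indispensable, and it is what the heuristic ``$g'(t_0)=0\Rightarrow g\equiv L$'' sketch in the main text silently relies on.
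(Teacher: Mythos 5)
Your proposal is correct, and its first half is essentially the paper's own argument: the identity $\Delta_q=\scalT{\boldsymbol{\delta}_{p,q}}{D(\nu_q)(D(\nu_q)+\lambda I_m)^{-1}\boldsymbol{\delta}_{p,q}}=\sum_j\frac{\lambda_j}{\lambda_j+\lambda}\scalT{\boldsymbol{\delta}_{p,q}}{\boldsymbol{d_j}}^2$ is exactly the content of the paper's Lemma on $Null(D)$ (the paper writes the weights as $\lambda\bigl(\frac{1}{\lambda}-\frac{1}{\lambda_j+\lambda}\bigr)$, which is the same quantity), and the dichotomy ``$\Delta_q=0$ iff $\boldsymbol{\delta}_{p,q}\in Null(D(\nu_q))$ iff, under (A), $\boldsymbol{\delta}_{p,q}=0$'' is used identically in both arguments. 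Where you genuinely diverge is in how this pointwise statement is converted into convergence. The paper argues that $g(t)=\text{MMD}^2(\nu_p,\nu_{q_t})$ is decreasing and bounded, hence has a limit $L$, and then asserts that ``when $g(t)$ reaches this limit at $t=t_0$ we have $g'(t_0)=0$'' --- which tacitly assumes the limit is attained at a finite time, something a strictly decreasing flow need not do; as written this is a gap (knowing only $\liminf_t \Delta_{q_t}=0$, which follows from $\int_0^\infty \Delta_{q_t}\,dt<\infty$, does not by itself force $L=0$). Your contradiction argument closes exactly this gap: compactness of $\mathcal{P}(\pazocal{X})$ for compact $\pazocal{X}$, weak continuity of $\nu\mapsto(\boldsymbol{\mu}(\nu),D(\nu))$ under (A1)--(A2), and continuity of $D\mapsto(D+\lambda I_m)^{-1}$ for $\lambda>0$ upgrade the pointwise positivity supplied by Assumption (A) to a uniform lower bound $\Delta_{q_t}\geq c>0$ on the sublevel set $\{g\geq L\}$, which is incompatible with $g\geq 0$. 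What your route buys is a rigorous proof at the cost of requiring compactness of $\pazocal{X}$ (already assumed in the paper) and continuity of the feature map and its Jacobian; what the paper's sketch buys is brevity. Your diagnosis of where the real difficulty lies --- promoting pointwise to uniform positivity --- is exactly right.
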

\begin{proof}

 \begin{align*}
\frac{1}{2}\frac{d}{dt}\text{MMD}^2(\nu_p,\nu_{q_{t}})&=-(\text{MMD}^2(\nu_p,\nu_{q_{t}})-\lambda \pazocal{S}^2_{\mathcal{H},\lambda}(\nu_p,\nu_{q_{t}}))\\
&\leq 0.
\end{align*}
Since $g(t)=\text{MMD}^2(\nu_p,\nu_{q_{t}})$ is decreasing and positive (bounded from below) it has a finite limit $L$ as $t\to \infty$. 
When $g(t)$ reaches this limit at $t=t_0$ we have $\frac{dg(t)}{dt}|_{t=t_0}=0$, and the graph of $g(t)$ remains constant, $g(t)=g(t_0)=L$ for $t\geq t_0$. Hence $\lim_{t\to \infty} \text{MMD}^2(\nu_p,\nu_{q_{t}})=L=g(t_0)$.
 Under  Assumption (A)  $\frac{dg(t)}{dt}|_{t=t_0}=0$ happens only when $\delta_{p,q_{t_0}}=0$. To see this, note that we have:
$\text{MMD}^2(\nu_p,\nu_{q_{t}})-\lambda \pazocal{S}^2_{\mathcal{H},\lambda}(\nu_p,\nu_{q_{t}})=\lambda \scalT{\delta_{p,q_t}}{(\frac{1}{\lambda}I- (D(\nu_{q_{t}})+\lambda I)^{-1} )\delta_{p,q_{t} }} $. For this term to be zero we have either  (a) $\delta_{p,q_{t}}=0$ or (b) $\delta_{p,q_{t} }\neq 0$ and $\delta_{p,q_{t}}\in Null(D(\nu_{q_t}))$ (See Lemma \ref{lemm:Null} in Appendix \ref{app:Proofs}). The case (b) is excluded by Assumption (A).  Hence, under this assumption, $\frac{dg(t)}{dt}|_{t=t_0}=0$ happens only when $\delta_{p,q_{t_0}}=0$,  i.e when $g(t_0)=\text{MMD}^2(\nu_p,\nu_{q_{t_0}})=\nor{\delta_{p,q_{t_0}}}^2=0$. We conclude therefore that the limit  $L=g(t_0)=0$. 

\end{proof}
\begin{lemma} Let $x\neq 0$, and $D$ a PSD matrix. 
 $\scalT{x}{(\frac{1}{\lambda}I- (D+\lambda I)^{-1} )x }=0$  if and only if $x \in Null (D)$.
 \label{lemm:Null}
\end{lemma}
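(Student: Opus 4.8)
The plan is to turn the quadratic form into a manifestly nonnegative squared norm and then read off when it vanishes. The key observation is that the matrix $M := \frac{1}{\lambda}I - (D+\lambda I)^{-1}$ admits a clean algebraic simplification. Since $D$ commutes with $(D+\lambda I)^{-1}$, I would write $\frac{1}{\lambda}I = \frac{1}{\lambda}(D+\lambda I)(D+\lambda I)^{-1}$ and factor:
$$M = \frac{1}{\lambda}\big[(D+\lambda I)-\lambda I\big](D+\lambda I)^{-1} = \frac{1}{\lambda}D(D+\lambda I)^{-1}.$$
This already shows $M$ is a product of the PSD matrix $D$ with the positive definite matrix $(D+\lambda I)^{-1}$, which commute, hence $M$ is itself PSD.

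Next I would symmetrize the factorization using the PSD square root $D^{1/2}$ (which commutes with $(D+\lambda I)^{-1}$ as well), obtaining $M = \frac{1}{\lambda}D^{1/2}(D+\lambda I)^{-1}D^{1/2}$. Plugging into the quadratic form and writing $(D+\lambda I)^{-1} = \big((D+\lambda I)^{-1/2}\big)^{\top}(D+\lambda I)^{-1/2}$ gives the squared-norm representation
$$\scalT{x}{M x} = \frac{1}{\lambda}\,\nor{(D+\lambda I)^{-1/2}D^{1/2}x}^2 \geq 0.$$
From here the equivalence is immediate: the right-hand side is zero if and only if $(D+\lambda I)^{-1/2}D^{1/2}x = 0$, and since $D+\lambda I \succeq \lambda I \succ 0$ makes $(D+\lambda I)^{-1/2}$ invertible, this happens exactly when $D^{1/2}x = 0$.

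Finally I would close the loop with the standard fact that, for a PSD matrix $D$, $D^{1/2}x = 0$ is equivalent to $Dx = 0$, since $\nor{D^{1/2}x}^2 = \scalT{x}{Dx}$; thus $\scalT{x}{Mx}=0$ iff $x \in \mathrm{Null}(D)$, as claimed. There is no real obstacle here — the only point requiring a word of justification is this last equivalence $D^{1/2}x=0 \Leftrightarrow Dx=0$. As an alternative that avoids square roots entirely, one could diagonalize $D = \sum_j \lambda_j \boldsymbol{d_j}\boldsymbol{d_j}^{\top}$, compute the eigenvalues of $M$ as $\frac{\lambda_j}{\lambda(\lambda_j+\lambda)}\geq 0$, note these vanish precisely when $\lambda_j = 0$, and observe that $\scalT{x}{Mx}=\sum_j \frac{\lambda_j}{\lambda(\lambda_j+\lambda)}\scalT{\boldsymbol{d_j}}{x}^2$ is zero iff $x$ has no component along any eigenvector with $\lambda_j>0$, i.e. iff $x\in\mathrm{Null}(D)$. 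This lemma is exactly what rules out case (b) in the proof of Proposition \ref{pro:conv}.
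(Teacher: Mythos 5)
Your proof is correct, and your main argument takes a genuinely different route from the paper's. The paper diagonalizes $D=\sum_j \lambda_j \boldsymbol{d_j}\boldsymbol{d_j}^{\top}$ and expands the quadratic form in the eigenbasis, $H(x)=\sum_{j}\left(\frac{1}{\lambda}-\frac{1}{\lambda_j+\lambda}\right)\scalT{x}{d_j}^2$, then argues that the coefficient vanishes exactly when $\lambda_j=0$ and is strictly positive otherwise --- i.e.\ precisely the ``alternative that avoids square roots'' you sketch at the end. Your primary argument instead factors $\frac{1}{\lambda}I-(D+\lambda I)^{-1}=\frac{1}{\lambda}D(D+\lambda I)^{-1}$ and symmetrizes to get the manifestly nonnegative form $\frac{1}{\lambda}\nor{(D+\lambda I)^{-1/2}D^{1/2}x}^2$, reducing the statement to $D^{1/2}x=0\Leftrightarrow Dx=0$, which you correctly justify via $\nor{D^{1/2}x}^2=\scalT{x}{Dx}$. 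Both are complete; your factorization is coordinate-free and carries over verbatim to the infinite-dimensional operator setting of Appendix C (where the paper would otherwise need a spectral decomposition of a compact operator), while the paper's eigenbasis computation is more elementary and ties directly into the ``principal transport directions'' interpretation of Section 3.2. One small point in the paper's favor: its forward direction as written asserts that $x\notin \mathrm{Null}(D)$ implies \emph{all} coefficients $\scalT{x}{d_j}$ for $j>i$ are nonzero, which is not needed (one suffices); your squared-norm version sidesteps this entirely.
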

\begin{proof}
 $H(x)=\scalT{x}{(\frac{1}{\lambda}I- (D+\lambda I)^{-1} )x }=\frac{1}{\lambda}\sum_{j=1}^m\scalT{x}{d_j}^2 -\sum_{j=1}^m  \frac{1}{\lambda_j+\lambda} \scalT{x}{d_j}^2 = \sum_{j=1}^m \left(\frac{1}{\lambda}- \frac{1}{\lambda_j +\lambda}\right) \scalT{x}{d_j}^2$. 
 Let $d_1,\dots d_i$ eigenvectors of $D$ with zero eigenvalues $Null(D)=span\{d_1,\dots d_i\}$.
 Hence:
 $$H(x)=\sum_{j=i+1}^m \left(\frac{1}{\lambda}- \frac{1}{\lambda_j +\lambda}\right) \scalT{x}{d_j}^2 $$
 
($\Leftarrow$) If $x\in Null(D)$ exists $\theta_j$,  such that $x=\sum_{j=1}^{i} \theta_j d_j $, and $x\perp d_{k} $, $k=i+1\dots m$, and hence $H(x)=0$.
 
($\Rightarrow$)  Assume $x\notin Null (D)$ then $x=\sum_{j=i+1}^m \scalT{x}{d_j} d_j$, with $\scalT{x}{d_j}\neq 0, \forall j=i+1\dots m$. Note that  for all  $j=i+1,\dots m$, we have:
$\frac{1}{\lambda}-\frac{1}{\lambda_j+\lambda}>0$ and $\scalT{x}{d_j}^2>0$. Hence $H(x)>0$ for $x\notin Null(D)$. 

\end{proof}

\section{Sobolev Descent with Infinite Dimensional RKHS}\label{app:InfDim}

In this Section we define the Kernelized Sobolev Discrepancy and Descent by looking for the optimal critic in a Hypothesis function class that is a Reproducing Kernel Hilbert Space (RKHS infinite dimensional case). We start first by reviewing some RKHS properties and assumptions needed for our development. 
\subsection{ Kernel Derivative Gramian Embedding of Distributions  }
Let $\mathcal{H}$ be a Reproducing Kernel Hilbert Space with an associated kernel $k:\pazocal{X}\times \pazocal{X}\to \mathbb{R}^{+}$. We make the following assumptions on $\mathcal{H}$:
\begin{enumerate}
\item[A1] There exists $\kappa_1<\infty$ such that $\sup_{x\in \pazocal{X}} \nor{k_x}_{\mathcal{H}}<\kappa_1$.
\item [A2] The kernel is $C^2(\pazocal{X}\times \pazocal{X})$ and there exists $\kappa_2<\infty$ such that for all $a=1\dots d$:\\
$\sup_{x\in \pazocal{X}} Tr((\partial_a k )_x\otimes (\partial_a k )_x )<\kappa_2$.
\item [A3] $\mathcal{H}$ vanishes on the boundary (assuming $\pazocal{X}=\mathbb{R}^d$ it is enough to  have for $f$ in $\mathcal{H}$  $\lim_{\nor{x}\to \infty} f(x)=0$).
\end{enumerate}  
We review here some basic properties of RKHS and function derivatives in RKHS \cite{derivativesRKHS}. The reproducing property give us that $f(x)=\scalT{f}{k_x}_{\mathcal{H}}$ moreover $(D_a f)(x)=\frac{\partial}{\partial x_a}f(x)=\scalT{f}{(\partial_ak)_x}_{\mathcal{H}} $, where $(\partial_a k)_x(t)=\scalT{\frac{\partial k(s,.)}{\partial s_a}\big|_{s=x}}{k_t}$. Note that those two quantities  ($f(x)$ and $(D_a f)(x)$) are well defined and bounded thanks to assumptions A1 and A2. \\
Similar to finite dimensional case we define the Kernel  Derivative Gramian Embedding \textbf{KDGE} of a distribution $\nu_q$ :
\vskip -0.51in
$$D(\nu_q)=  \mathbb{E}_{x\sim \nu_q} \sum_{a=1}^d (\partial_a k)_x\otimes (\partial_a k)_x ~ D(\nu_q)\in \mathcal{H}\otimes \mathcal{H})$$
KDGE is an operator  embedding of the distribution in $\mathcal{H}\otimes \mathcal{H}$, that takes the fingerprint of the distribution with respect to the kernel derivatives averaged over all coordinates. 
The Kernel mean embedding is defined as follows:
$$\mu(\nu_p)=\mathbb{E}_{x\sim \nu_p}k_{x} \in \mathcal{H}.$$
\subsection{Regularized Kernel Sobolev Descent }
Let $\lambda>0$, similarly the Kernel Sobolev Discrepancy has the following form:
$ \pazocal{S}^2_{\mathcal{H},\lambda}(\nu_p,\nu_q)=\nor{(D(\nu_q)+\lambda I)^{-\frac{1}{2}} (\mu(\nu_p)-\mu(\nu_q))}^2_{\mathcal{H}},$
and the Sobolev critic is defined as follows:
$u^{\lambda}_{p,q}=(D(\nu_q)+\lambda I)^{-1}(\mu(\nu_p)-\mu(\nu_q)) \in \mathcal{H}$
its evaluation function is 
$u^{\lambda}_{p,q}(x)=\scalT{(D(\nu_q)+\lambda I)^{-1}(\mu(\nu_p)-\mu(\nu_q))}{k_x}_{\mathcal{H}}$ and it derivatives for $a=1\dots d$:
$\partial_{a} u^{\lambda}_{p,q}(x)=\scalT{(D(\nu_q)+\lambda I)^{-1}(\mu(\nu_p)-\mu(\nu_q))}{\partial_{a}k_x}_{\mathcal{H}}$.

The following Theorem for inf. Dim RKHS parallels Theorem \ref{theo:PertubationMMDSobolev} for finite Dim RKHS. Hence inf. Dim Kernel Sobolev descent decreases the MMD as well, and as discussed in Section \ref{sec:convergence} under Assumption (A) using a characteristic or a universal kernel we garantee the convergence of Sobolev descent in the MMD sense as well as in distribution. 

Note that in the case $\lambda=0$, Sobolev critic is not well defined unless we assume that $\mu(\nu_p)-\mu(\nu_q)$ is in the range of $D(\nu_q)$. If we make this assumption the following theorem  holds also for $\lambda=0$.

\begin{theorem}[Transport Using Gradient flows of Infinite dim. RKHS]Let $\lambda>0$ . Let $u^{\lambda}_{p,q}$ the unnormalized  solution of the regularized Kernel Sobolev discrepancy between $\nu_p$ and $\nu_q$ i.e $u^{\lambda}_{p,q}=(D(\nu_q)+\lambda I)^{-1}(\mu(\nu_p)-\mu(\nu_q)).$ Consider $d\chi_{u}(x)=-div(q(x)\nabla_x u^{\lambda}_{p,q}(x))dx$, i.e corresponding to the infinitesimal transport of $\nu_q$ via $T(x)=x+\varepsilon \nabla_x u^{\lambda}_{p,q}(x)$. 
We have the following first variation of the $\text{MMD}^2$ under this particular perturbation:
$$\frac{d}{d\varepsilon}\text{MMD}^2(\nu_p,\nu_q+\varepsilon\chi_{u})\Big|_{\varepsilon=0}=- 2\left(\text{MMD}^2(\nu_p,\nu_q)-\lambda \pazocal{S}^2_{\mathcal{H},\lambda}(\nu_p,\nu_q)\right)\leq 0. $$
\label{theo:PertubationMMDSobolevINFDim}
\end{theorem}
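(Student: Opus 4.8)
The plan is to mirror the finite-dimensional argument of Theorem~\ref{theo:PertubationMMDSobolev}, replacing matrix manipulations in $\mathbb{R}^m$ by operator manipulations on the Hilbert space $\mathcal{H}$ and being careful that every operator-valued integral is a well-defined Bochner integral. First I would invoke the first-variation lemma for $\text{MMD}^2$ (which is stated for a general $\mathcal{H}$ and an arbitrary admissible perturbation $\chi$), giving
$$\frac{1}{2}\frac{d}{d\varepsilon}\text{MMD}^2(\nu_p,\nu_q+\varepsilon\chi_u)\Big|_{\varepsilon=0}=-\int_{\pazocal{X}}\delta_{p,q}(x)\,d\chi_u(x),$$
where $\delta_{p,q}(x)=\scalT{\mu(\nu_p)-\mu(\nu_q)}{k_x}_{\mathcal{H}}$. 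Substituting $d\chi_u(x)=-div(q(x)\nabla_x u^{\lambda}_{p,q}(x))\,dx$ and integrating by parts via the divergence theorem — using the boundary condition A3 (or the weakened version of the Remark) to discard the boundary term — reduces the right-hand side to $-\int_{\pazocal{X}}\scalT{\nabla_x\delta_{p,q}(x)}{\nabla_x u^{\lambda}_{p,q}(x)}q(x)\,dx$.

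The genuinely new ingredient relative to the finite-dimensional case is the translation of the gradients into the RKHS. Using the derivative reproducing property $(\partial_a g)(x)=\scalT{g}{(\partial_a k)_x}_{\mathcal{H}}$ for $g\in\{\delta_{p,q},u^{\lambda}_{p,q}\}$, I would write the pointwise inner product of gradients as $\sum_{a=1}^d\scalT{\delta_{p,q}}{(\partial_a k)_x}_{\mathcal{H}}\scalT{(\partial_a k)_x}{u^{\lambda}_{p,q}}_{\mathcal{H}}=\scalT{\delta_{p,q}}{\big(\sum_{a=1}^d(\partial_a k)_x\otimes(\partial_a k)_x\big)u^{\lambda}_{p,q}}_{\mathcal{H}}$. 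Taking the expectation over $x\sim\nu_q$ and exchanging it with the $\mathcal{H}$-inner product then yields exactly $-\scalT{\delta_{p,q}}{D(\nu_q)u^{\lambda}_{p,q}}_{\mathcal{H}}$, with $D(\nu_q)=\mathbb{E}_{x\sim\nu_q}\sum_{a=1}^d(\partial_a k)_x\otimes(\partial_a k)_x$ the KDGE operator.

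From here the finish is algebraically identical to the finite-dimensional proof: decompose $D(\nu_q)=(D(\nu_q)+\lambda I)-\lambda I$, use the defining equation $(D(\nu_q)+\lambda I)u^{\lambda}_{p,q}=\mu(\nu_p)-\mu(\nu_q)=\delta_{p,q}$ together with $\scalT{\delta_{p,q}}{u^{\lambda}_{p,q}}_{\mathcal{H}}=\pazocal{S}^2_{\mathcal{H},\lambda}(\nu_p,\nu_q)$ to obtain $-2(\text{MMD}^2(\nu_p,\nu_q)-\lambda\pazocal{S}^2_{\mathcal{H},\lambda}(\nu_p,\nu_q))$. The sign follows from the operator-norm bound $\pazocal{S}^2_{\mathcal{H},\lambda}=\scalT{\delta_{p,q}}{(D(\nu_q)+\lambda I)^{-1}\delta_{p,q}}_{\mathcal{H}}\le\frac{1}{\lambda}\text{MMD}^2(\nu_p,\nu_q)$, since $D(\nu_q)\succeq 0$ forces $\nor{(D(\nu_q)+\lambda I)^{-1}}_{op}\le\frac{1}{\lambda}$.

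The main obstacle — and essentially the only new point — is making the operator theory rigorous in infinite dimensions. I would lean on assumptions A1--A2 to guarantee that $(\partial_a k)_x\in\mathcal{H}$ with $\sup_{x}\mathrm{Tr}((\partial_a k)_x\otimes(\partial_a k)_x)<\infty$, so that each rank-one operator is trace class with uniformly bounded trace; this makes $D(\nu_q)$ a well-defined, self-adjoint, positive, trace-class operator as a Bochner integral and legitimizes interchanging $\mathbb{E}_{x\sim\nu_q}$ with the $\mathcal{H}$-inner product in the step above. For $\lambda>0$, positivity of $D(\nu_q)$ makes $D(\nu_q)+\lambda I$ boundedly invertible, so $u^{\lambda}_{p,q}$ and $\pazocal{S}^2_{\mathcal{H},\lambda}$ are well-defined and finite. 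The degenerate case $\lambda=0$ requires the extra range assumption $\mu(\nu_p)-\mu(\nu_q)\in\mathrm{Ran}(D(\nu_q))$ flagged just before the statement, which I would invoke only there so that $(D(\nu_q))^{-1}$ may be replaced by a pseudoinverse on the relevant subspace.
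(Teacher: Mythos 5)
Your proposal follows the paper's own proof step for step: the first-variation lemma for $\text{MMD}^2$, integration by parts with the zero boundary condition, the derivative reproducing property to recover $-\scalT{\delta_{p,q}}{D(\nu_q)u^{\lambda}_{p,q}}_{\mathcal{H}}$, the decomposition $D(\nu_q)=(D(\nu_q)+\lambda I)-\lambda I$, and the operator-norm bound for the sign. The only difference is that you spell out the functional-analytic justifications (trace class, Bochner integrability, bounded invertibility) more explicitly than the paper does, which is a welcome but not substantive addition.
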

\begin{proof}[Proof of Theorem \ref{theo:PertubationMMDSobolevINFDim}]
\begin{align*}
\frac{1}{2}\frac{d}{d\varepsilon}\text{MMD}^2(\nu_p,\nu_q+\varepsilon\chi_u)\Big|_{\varepsilon=0}&=-\int \delta_{p,q}d\chi_u=-\int_{\pazocal{X}}\delta_{p,q}(x)(-div(q(x)\nabla_x u^{\lambda}_{p,q}(x)))dx \\
&=\int_{\pazocal{X}}\delta_{p,q}(x)div(q(x)\nabla_x u^{\lambda}_{p,q}(x))dx\\
&=-\int_{\pazocal{X}} \scalT{\nabla_x \delta_{p,q}(x)}{\nabla_x u^{\lambda}_{p,q}(x)}q(x)dx \text{ (Divergence theorem and zero boundary)}\\
&=-\int_{\pazocal{X}}\sum_{a=1}^d \scalT{\delta_{p,q}}{\partial_a k_x}_{\mathcal{H}} \scalT{u^{\lambda}_{p,q}}{\partial_a k_x}_{\mathcal{H}}q(x)dx\\
&=-\int_{\pazocal{X}}\scalT{\delta_{p,q}}{\left(\sum_{a=1}^d \partial_a k_x \otimes \partial_a k_x\right) u^{\lambda}_{p,q}}_{\mathcal{H}} q(x) dx\\
&=-\scalT{\delta_{p,q}}{\left(\int_{\pazocal{X}}\left(\sum_{a=1}^d \partial_a k_x \otimes \partial_a k_x\right)q(x) dx\right) u^{\lambda}_{p,q} }_{\mathcal{H}}\\
&=-\scalT{\delta_{p,q}}{\left(\mathbb{E}_{x\sim \nu_q}\sum_{a=1}^d \partial_a k_x \otimes \partial_a k_x\right)u^{\lambda}_{p,q}}_{\mathcal{H}}\\
&=- \scalT{\delta_{p,q}}{D(\nu_q)u^{\lambda}_{p,q}}_{\mathcal{H}} \text{(by definition)}\\
&=- \scalT{\delta_{p,q}}{\left(D(\nu_q)+ \lambda I -\lambda I\right) u^{\lambda}_{p,q}}_{\mathcal{H}}\\
& =- \scalT{\delta_{p,q}}{\left(D(\nu_q)+ \lambda I\right) u^{\lambda}_{p,q}}_{\mathcal{H}}+\lambda \scalT{\delta_{p,q}}{u^{\lambda}_{p,q}}_{\mathcal{H}}\\
\end{align*}
Recall that : $$\left(D(\nu_q)+ \lambda I\right) u^{\lambda}_{p,q}=\delta_{p,q},$$
and by definition the regularized Kernel Sobolev Discrepancy we have:
$$\scalT{\delta_{p,q}}{u^{\lambda}_{p,q}}_{\mathcal{H}}=\pazocal{S}^2_{\mathcal{H},\lambda}(\nu_p,\nu_q),$$
Hence replacing the expressions above we obtain:
\begin{align*}
\frac{1}{2}\frac{d}{d\varepsilon}\text{MMD}^2(\nu_p,\nu_q+\varepsilon\chi)\Big|_{\varepsilon=0}&= - \scalT{\delta_{p,q}}{\delta_{p,q}}_{\mathcal{H}}+ \lambda \pazocal{S}^2_{\mathcal{H},\lambda}(\nu_p,\nu_q)\\
&=-\text{MMD}^2(\nu_p,\nu_q)+ \lambda \pazocal{S}^2_{\mathcal{H},\lambda}(\nu_p,\nu_q)\\
&= -\left(\text{MMD}^2(\nu_p,\nu_q)-\lambda \pazocal{S}^2_{\mathcal{H},\lambda}(\nu_p,\nu_q)\right)
\end{align*}

 Hence:
\begin{align*}
\pazocal{S}^2_{\mathcal{H},\lambda}(\nu_p,\nu_q) = \scalT{\delta_{p,q}}{(D(\nu_q)+\lambda I)^{-1}\delta_{p,q}}&\leq \nor{(D(\nu_q)+\lambda I)^{-1}}_{\mathcal{L}(\mathcal{H})}\nor{\delta_{p,q}}^2_{\mathcal{H}}\\
&\leq \frac{1}{\lambda}\text{MMD}^2(\nu_p,\nu_q),
\end{align*}
It follows that : $$\text{MMD}^2(\nu_p,\nu_q)-\lambda \pazocal{S}^2_{\mathcal{H},\lambda}(\nu_p,\nu_q) \geq 0$$
and 
$$\frac{1}{2}\frac{d}{d\varepsilon}\text{MMD}^2(\nu_p,\nu_q+\varepsilon\chi)\Big|_{\varepsilon=0}= -\left(\text{MMD}^2(\nu_p,\nu_q)-\lambda \pazocal{S}^2_{\mathcal{H},\lambda}(\nu_p,\nu_q)\right)\leq 0.$$
\end{proof}

\section{Continuous Regularized Kernel Sobolev Descent}
This section gives some more intuition on a continuous form of Sobolev Descent.
\paragraph{Non linear Fokker Planck and Deterministic Mckean Vlasov Processes.}

The regularized Kernel Sobolev descent can be seen as a continuous process, written in this primal form:  
$$\min_{u_{p,q_t} \in \mathcal{H},q_t} \int_{0}^{\infty} \left( \int_{\pazocal{X}}\nor{\nabla_xu_{p,q_t}(x)}^2 d\nu_{q_t}(x)+\lambda \nor{u_{p,q_t}}^2_{\mathcal{H}}-2(\mathbb{E}_{x\sim p} u_{p,q_t}(x)-\mathbb{E}_{x\sim \nu_{q_t}}u_{p,q_t}(x)) \right)dt$$
$$  \frac{\partial q_t}{\partial t}(x)=-div(q_{t}(x) \nabla_x u_{p,q_t}(x)), \nu_{q_0}=\nu_{q}$$
This form gives us the interpretation that we are seeking potentials $u_{p,q_t}$ in the finite dimensional RKHS, that have minimum regularized kinetic energy $ \int_{\pazocal{X}}\nor{\nabla_xu_{p,q_t}(x)}^2 d\nu_{q_t}(x)+\lambda \nor{u_{p,q_t}}^2_{\mathcal{H}}$  and that advects $q_t$ to $p$. The advection  can be seen informally by noting that we want to maximize $\mathbb{E}_{x\sim p} u_{p,q_t}(x)-\mathbb{E}_{x\sim \nu_{q_t}}u_{p,q_t}(x))=\scalT{\boldsymbol{u}_{p,q_t}}{\boldsymbol{\mu}(p)-\boldsymbol{\mu}(q_{t})}$, meaning we want $\boldsymbol{u}_{p,q_t}$ to be aligned with the correct transport direction from $q$ to $p$ . The evolution of the density is then dictated by the non linear fokker planck equation known as the deterministic Mckean-Vlasov equation:
$$\frac{\partial q_t}{\partial t}(x)=-div(q_{t}(x) \nabla_x u_{p,q_t}(x))$$

The primal form given above is not computational friendly and hence we are using 1)  the dual form of the Sobolev Discrepancy and 2) the equivalence between stochastic differential equation in general and the Mckean Vlasov process, as summarized below:

$$\sup_{f_{p,q_t} \in \mathcal{H},q_t} \int_{0}^{\infty} (\mathbb{E}_{x\sim p}f_{p,q_t}(x)-\mathbb{E}_{x\sim q_t}f_{p,q_t}(x)) dt $$
$$ \text{s.t }  \mathbb{E}_{x\sim q_t}\nor{\nabla_x f_{p,q_t}(x)}^2+\lambda \nor{f_{p,q_t}}^2_{\mathcal{H}}\leq 1$$
$$u_{p,q_t}=\pazocal{S}_{\mathcal{H},\lambda}(\nu_p,\nu_{q_t}) f^*_{p,q_t}$$
 $$ dX_t =\nabla_x u_{p,q_t}(X_t) dt ~ X_t \sim \nu_{q_t}~ X_0 \sim \nu_q$$
Finite dimensional RKHS Sobolev descent is exploiting this computational friendly formulation: $u_{p,q_t}$ has a closed form solutions at each time $t$.
Neural Sobolev Descent is also using this formulation by solving the optimization problem for each $u_{p,q_t}$ using gradient descent and an augmented lagrangian.
\paragraph{What happens when considering $\mathcal{H}=W^{1,2}_{0}$ and no Regularization?}

\begin{theorem}[Convergence of the continuous limit of Sobolev Descent]
Consider particles  $X_0$ with density function $q_0=q$ the source density. Let $\nu_p$ be the target measure whose density is $p$.
Consider the following continuous process:
\begin{equation}
dX_t=  \pazocal{S}(\nu_p,\nu_{q_t})\nabla_x f^*_{\nu_p,\nu_{q_t}}(x) dt,
 \label{eq:continuousDescent}
\end{equation}
let $q_t$ be the density function of particles $X_t$ and $f^*_{\nu_p,\nu_{q_t}}$ the optimal Sobolev critic between $\nu_p$ and $\nu_{q_t}$(whose densities are $p$ and $q_t$ respectively).
We have:
$$q_t(x)= \left(1-e^{-t}\right)p(x) + e^{- t}q(x),$$
 The density $q_t$ of the particles $X_t$ approaches the target density $p$, as $t\to \infty$ ( therefore as $t\to \infty$  $q_t \to p$).
 \label{theo:ContinuousDescentW2}
\end{theorem}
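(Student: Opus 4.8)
The plan is to collapse the nonlinear advection \eqref{eq:continuousDescent} into a decoupled linear ODE on densities by exploiting the primal (inf) characterization of the Sobolev discrepancy from the Definition.

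First I would identify the scaled velocity field. Let $\phi_{q_t}$ be the potential solving the continuity-type PDE $p - q_t = -\text{div}(q_t \nabla_x \phi_{q_t})$ with $\phi_{q_t}|_{\partial\pazocal{X}} = 0$, which is exactly the constraint appearing in the primal form of $\pazocal{S}$. For any test function $f \in W^{1,2}_0(\pazocal{X}, \nu_{q_t})$, integrating by parts (using the zero boundary condition so the boundary term vanishes) gives
$$\underset{x\sim\nu_p}{\mathbb{E}}f(x) - \underset{x\sim\nu_{q_t}}{\mathbb{E}}f(x) = \int_{\pazocal{X}} \scalT{\nabla_x f(x)}{\nabla_x \phi_{q_t}(x)} q_t(x)\,dx.$$
By Cauchy--Schwarz over the constraint set $\underset{x\sim\nu_{q_t}}{\mathbb{E}}\nor{\nabla_x f}^2 \leq 1$, the dual supremum equals $\nor{\nabla_x \phi_{q_t}}_{L^2(\nu_{q_t})}$ and is attained at the normalized witness $f^*_{\nu_p,\nu_{q_t}} = \phi_{q_t}/\nor{\nabla_x \phi_{q_t}}_{L^2(\nu_{q_t})}$. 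Hence $\pazocal{S}(\nu_p,\nu_{q_t}) = \nor{\nabla_x \phi_{q_t}}_{L^2(\nu_{q_t})}$, and multiplying the witness by $\pazocal{S}$ cancels the normalization:
$$\pazocal{S}(\nu_p,\nu_{q_t})\,\nabla_x f^*_{\nu_p,\nu_{q_t}}(x) = \nabla_x \phi_{q_t}(x).$$

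Next I would write the continuity (Liouville) equation for the deterministic flow $dX_t = \nabla_x \phi_{q_t}(X_t)\,dt$. This is precisely the first-order density expansion already established in Appendix \ref{app:Proofs} with $\psi = \phi_{q_t}$, so the pushforward density satisfies
$$\frac{\partial q_t}{\partial t}(x) = -\text{div}\big(q_t(x)\,\nabla_x \phi_{q_t}(x)\big).$$
The crucial observation is that, by the very PDE defining $\phi_{q_t}$, the right-hand side equals $p(x) - q_t(x)$. This turns the nonlinear transport into the pointwise-in-$x$ scalar linear ODE $\partial_t q_t(x) = p(x) - q_t(x)$ with initial datum $q_0 = q$, whose solution is $q_t(x) = (1 - e^{-t})p(x) + e^{-t}q(x)$; letting $t\to\infty$ yields $q_t \to p$.

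The main obstacle is analytic well-posedness rather than the algebra: one must guarantee that the PDE $p - q_t = -\text{div}(q_t \nabla_x \phi_{q_t})$ admits a solution regular enough that $\nabla_x \phi_{q_t}$ is an admissible velocity field and the divergence theorem applies with vanishing boundary term (this is where the choice $\mathcal{H} = W^{1,2}_0$ and the finiteness of the $\dot{H}^{-1}(\nu_{q_t})$ norm are used), and that the Liouville step genuinely propagates the density along the flow. Granting this regularity, all the steps above are immediate.
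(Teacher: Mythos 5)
Your proposal is correct and follows essentially the same route as the paper's proof: combine the defining PDE of the Sobolev critic, $p - q_t = -\pazocal{S}(\nu_p,\nu_{q_t})\,\mathrm{div}(q_t\nabla_x f^*_{\nu_p,\nu_{q_t}})$, with the continuity (Fokker--Planck / Liouville) equation for the flow to obtain the pointwise linear ODE $\partial_t q_t = p - q_t$ and solve it. The only difference is that you re-derive the primal--dual identification $\pazocal{S}\,\nabla_x f^* = \nabla_x\phi_{q_t}$ via integration by parts and Cauchy--Schwarz, whereas the paper takes this characterization of the critic as given from the Sobolev GAN reference; your closing caveat about well-posedness of the potential PDE is a fair point that the paper also leaves implicit.
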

We see therefore that the unregularized theoretical Sobolev descent boils down also to interpolation, hence the crucial role of regularization.
\begin{proof}[Proof of Theorem \ref{theo:ContinuousDescentW2}]
Let  $f^*_{\nu_p,\nu_{q_t}}$ be the Sobolev critic between $q_t$ and $p$, it satisfies the following PDE (See \cite{SobolevGAN} for instance) :
\begin{equation}
 p(x)-q_t(x)=-\pazocal{S}(\nu_p,\nu_{q_t}) \text{div} (q_t(x)\nabla_x f^*_{\nu_p,\nu_{q_t}}(x)),
 \label{eq:fokkerPlanckFormSobolev}
\end{equation}
where $q_t$ is the distribution of the particles moving with the flow: 
$$dX_t = \pazocal{S}(\nu_{p},\nu_{q_t}) \nabla_x f^*_{\nu_p,\nu_{q_t}}(X_t) dt, \text{where the density of } X_0 \text{ is given by } q_0(x)=q(x)$$
by non linear fokker planck equation and results on Mckean Vlasov processes~\cite{funaki1984certain}, the distribution $q_t$ evolves as follows:
\begin{equation}
\frac{\partial }{\partial t} q_t(x)= - \pazocal{S}(\nu_p,\nu_{q_t}) \text{div} (q_t(x)\nabla_x f^*_{\nu_p,\nu_{q_t}}(x))
\label{eq:FokkerPlanckSobolev}
\end{equation}
From Equation \eqref{eq:fokkerPlanckFormSobolev} and \eqref{eq:FokkerPlanckSobolev} we see that:
$$\frac{\partial}{\partial t}q_t(x)=  \left(p(x)-q_t(x)\right),$$
in other words:
$$\frac{\partial }{\partial t}(p(x)-q_t(x))= -\left(p(x)-q_t(x)\right),$$
Hence :
\vskip -0.3 in
\begin{eqnarray*}
p(x)-q_t(x)&=& \left(p(x)-q_0(x)\right)e^{- t}\\
&=& e^{- t}\left(p(x)-q(x)\right)
\end{eqnarray*}
\noindent It follows:
$$q_t(x)= \left(1-e^{- t}\right)	p(x) + e^{- t}\underbrace{q(x)}_{q_0(x)}$$
therefore as $t\to \infty$,  $q_t \to p$.
\end{proof}

\section{Regularization as smoothing of Principal Transport Directions.} 
\label{app:regu_smoothing}
In order to further understand the role of regularization let us take a close look on the expression of the Sobolev critic.
Let $(\lambda_j,\boldsymbol{d_j}),j=1\dots m$ be Eigen system  the KDGE $D(\nu_q)$.
We have: $\boldsymbol{u}^{\lambda}_{p,q}=(D(\nu_q)+\lambda I)^{-1}(\boldsymbol{\mu}(\nu_p)-\boldsymbol{\mu}(\nu_q))=\sum_{j=1}^{m}\frac{1}{\lambda_j+\lambda}\scalT{\boldsymbol{d_j}}{\boldsymbol{\mu}(\nu_p)-\boldsymbol{\mu}(\nu_q)}d_j.$
It follows that $ \nabla_x u^{\lambda}_{p,q}(x)=\sum_{j=1}^{m}\frac{1}{\lambda_j+\lambda}\scalT{\boldsymbol{d_j}}{\boldsymbol{\mu}(\nu_p)-\boldsymbol{\mu}(\nu_q)}\nabla_x d_j(x),$
where $\nabla_x d_j(x)=[J\Phi(x)]\boldsymbol{d_j}$ and $J\Phi(x) \in \mathbb{R}^{d\times m}$ is the jacobian of $\Phi$, $[J\Phi]_{a,j}(x)=\frac{\partial}{\partial x_a}\Phi_j(x) $.
One can think of $\nabla_x d_j(x)$ as \emph{principal transport directions.}
Regularization is introducing therefore a spectral filter on the principal transport directions by weighing down directions with low eigenvalues $\frac{1}{\lambda_j+\lambda}\approx \frac{1}{\lambda}$ for $\lambda_j<\lambda$, and $\frac{1}{\lambda_j+\lambda}\approx\frac{1}{\lambda_j}$ otherwise.
Principal transport directions with small eigenvalues contribute to the fast exponential convergence and may result in discontinuous paths.
Regularization filters out those directions, resulting in smoother probability paths between $\nu_q$ and $\nu_p$.
\begin{figure*}[ht!]
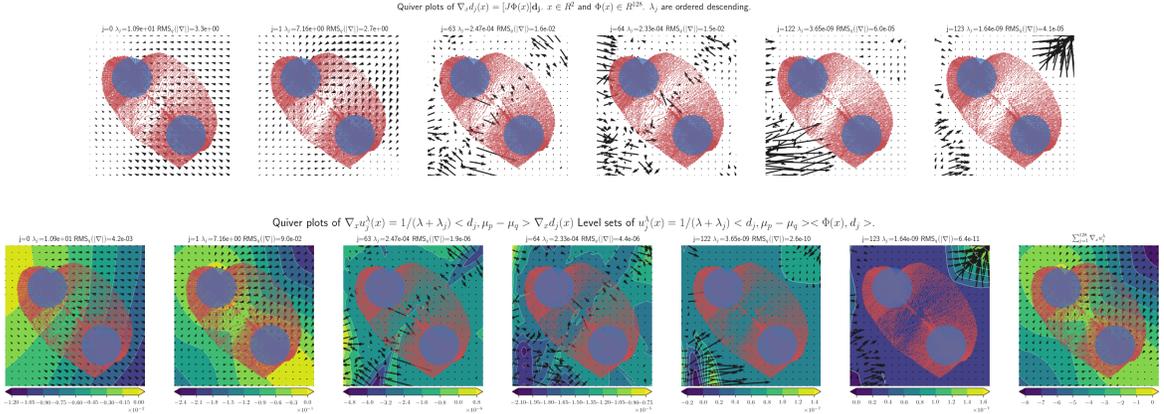

\centering
\includegraphics[height=80px]{figs/principdirs_neural_dj.png}
\includegraphics[height=80px]{figs/principdirs_neural_uj.png}
\vskip -0.2in
\caption{
The principal transport directions for an intermediate state $q_t$ (red cloud) in the shape morphing application with Neural Sobolev Descent (see Figure \ref{fig:morphing_nsd}).
The top row shows $\nabla_x d_j(x)$, bottom row shows $\nabla_x u_j^\lambda(x)$ for $\lambda=0.3$.
Note how small $j$ (large eigenvalues) correspond to smooth vectorfields where the vectors have large norm (as measured in RMS over the points in point cloud $x \sim \nu_{q_t}$).
The intermediate and large $j$ values correspond to non-smooth vectorfields and non-smooth motions. For $\nabla_x u_j^\lambda(x)$, the principal transport directions $\nabla_x d_j(x)$ are multiplied with $\frac{1}{\lambda+\lambda_j}$ and the inner product with $\mu_p-\mu_q$ (two scalar multipliers). We see the non-smooth $\nabla_x u_j^\lambda(x)$ (small $\lambda_j$) have small RMS norm and contribute less, as they are effectively filtered out by the smoothing parameter $\lambda$.
The bottom right subplot shows the total critic $u^\lambda(x)=\sum_{j=1}^m u^\lambda_j(x)$.
}
\label{fig:Transpreput}
\vskip -0.2in
\end{figure*}
\section{Relation to Previous works} 
\label{app:diagram_ot}
\begin{figure}[ht!]
\centering
\includegraphics[width=0.3\textwidth]{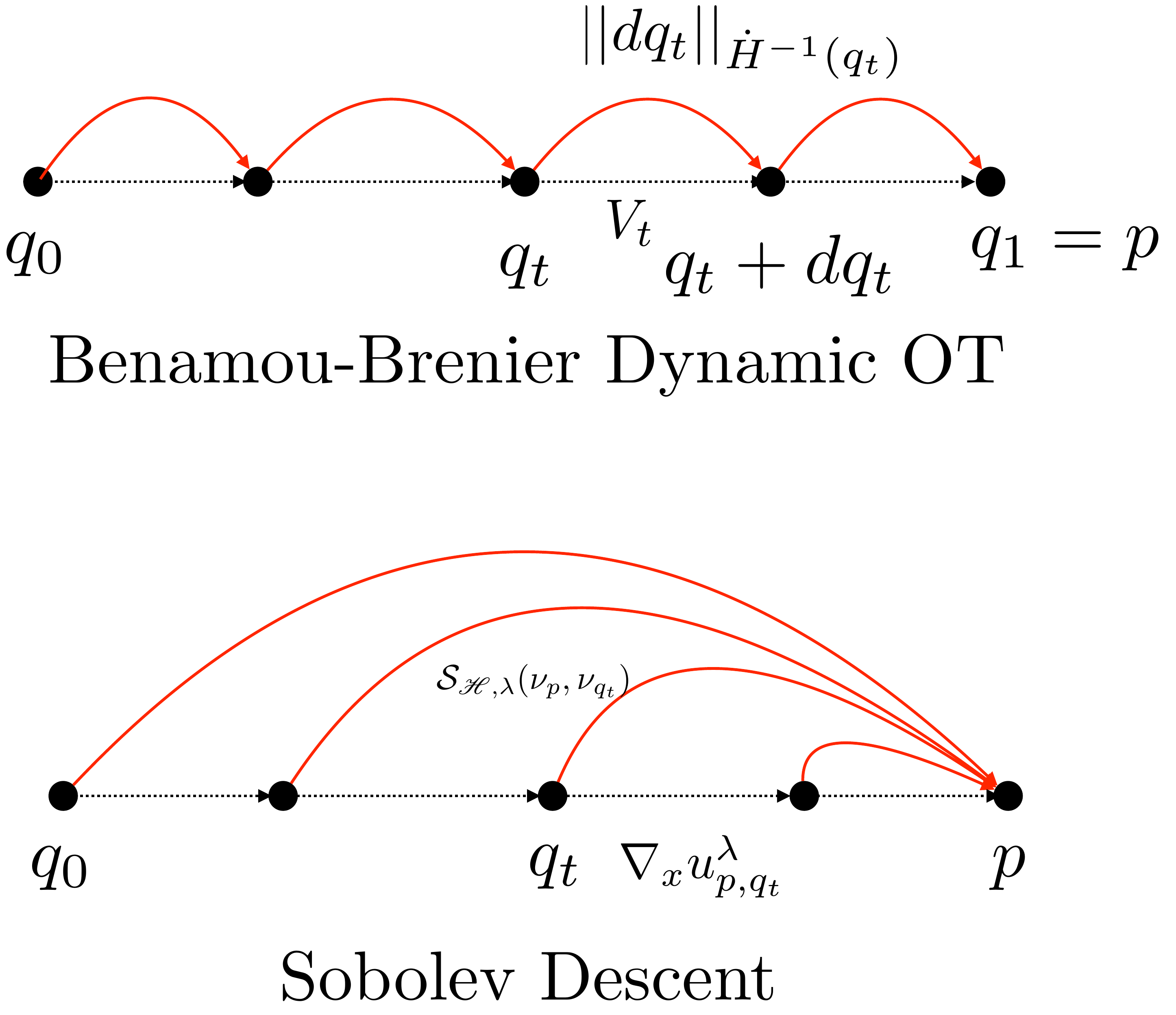}
\caption{Both formulations minimize a form of kinetic energy, represented with red arrows.
  While this energy is between consecutive timesteps for dynamic OT (Benamou-Brenier),
  it is between the current distribution and the target for Sobolev Descent.
  The velocity fields are represented with dashed arrows, and can be expressed through gradients of a convex potential for dynamic OT.
  For Sobolev Descent, the velocity fields are the gradient of the Sobolev Critic.
}
\label{fig:diagram_ot}
\end{figure}

\begin{table}[ht!] 
\centering
\resizebox{\textwidth}{!}{
\begin{tabular}{| l | l | l | l | l | }
\hline
 &~Densities~~~ &~~~~~~~Kinetic Energy (KE)  &~~~~~~~~~~~~Paths  \\
 & &~~~~~~~~~~and Velocities~~~ &~~~~~~~~~ \\
\hline
Benamou   &$p,q$ & KE between $q_t$ and $q_{t}+dq_{t}$  & Optimal Paths \\
Brenier  &known &Min KE$=\nor{dq_t}^2_{\dot{H}^{-1}(q_t)}$  &minimizing KE\\
\cite{dynamicTransport}    & &Min KE$=\int_{\pazocal{X}}\nor{V^*_{t}(x)}^2q_{t}(x)dx$  &between time steps \\
& & Velocity $V^*_{t}$ from critic of $\nor{dq_t}^2_{\dot{H}^{-1}(q_t)}$   &$T_{\#}(\nu_{q})=\nu_p, $\\
& &  &$W^2_2(p,q)=\int_{0}^1\nor{dq_t}^2_{\dot{H}^{-1}(q_t)}dt$\\
\hline
\hline
Stein &  $p$ known&KE between $q_t$ and $p$    &Paths \\
Descent&  samples   &Velocity $\varphi^*_{p,q_t}(x)$ &minimizing KL divergence\\
\cite{steindescent,Stein}&  $\sim q$ & $\varphi^*_{p,q_t} \in \mathcal{H}^d$ critic of $\mathbb{S}^2(p,q_{t})$    &between $q_t$ and target $p$\\
&   & $\text{KE}= \int_{\pazocal{X}}\nor{\varphi^*_{p,q_t}(x)}^2q_t(x)dx$ (not min) &$\lim_{t\to\infty} \text{KL}(q_t,p)= 0$  \\
 \hline 
\hline
Reg. & samples   &KE between $q_t$ and $p$  & Tunable paths via $\lambda$\\
Sobolev   &   $\sim p$&Min Reg KE$=\pazocal{S}^2_{\mathcal{H},\lambda}(p,q_{t})$  &minimizing Reg. KE \\
 Descent& samples& $=\int_{\pazocal{X}} \nor{\nabla_xu^{\lambda}_{p,q}}^2q_{t}(x)dx+\lambda \nor{u^{\lambda}_{p,q_t}}^2_{\mathcal{H}}$  &between $q_t$ and target $p$  \\
(This work)&$\sim q$ & Velocity $\mathbf{\nabla_x} u^{\lambda}_{p,q_t}(x)$  &$\lim_{t\to\infty} \text{MMD}(q_t,p)= 0$\\
& &  $ u^{\lambda}_{p,q_t} \in \mathcal{H}$ critic of $\pazocal{S}^2_{\mathcal{H},\lambda}(p,q_{t})$  & \\
  \hline 
\end{tabular}}
\caption{Comparison with Benamou-Brenier and Stein Descent.}
\label{tab:comparison}
\vskip -0.2in
\end{table}

\section{Algorithm}
\begin{algorithm}[ht!]
\caption{Empirical Kernelized Sobolev Descent}
\begin{algorithmic}
 \STATE {\bfseries Inputs:} $\varepsilon$ Learning rate, $L$ number of iterations \\
  $\{x_i,i=1\dots N\}$, drawn from target distribution $\nu_p$,
   $\{y_j,j=1\dots M\}$ drawn from source distribution $\nu_q$
   $\mathcal{H}$ a Hypothesis Class \\
 \STATE {\bfseries Initialize} $x^0_j=y_j, j=1\dots M$ 
 \FOR{ $\ell=1\dots L$}
\STATE \emph{\bfseries Critic Update}\\
 \STATE Compute Sobolev Critic in $\mathcal{H}$, between $q_{\ell-1}$ and $p$\\
 \STATE $\boldsymbol{\hat{u}}^{\lambda}_{p,q_{\ell-1}}= \left(\hat{D}(\hat{\nu}_{q_{\ell-1}})+\lambda I_m\right)^{-1} \left(\hat{\boldsymbol{\mu}}(\hat{\nu}_p)-\hat{\boldsymbol{\mu}}(\hat{\nu}_{q_{\ell-1}}) \right)$
 \STATE \emph{\bfseries Particles Update}\\
 \FOR{$j=1$ {\bfseries to} $M$}
 \STATE $x^{\ell}_j = x^{\ell-1}_j +\varepsilon \nabla_x \hat{u}^{\lambda}_{p,q_{\ell-1}}(x^{\ell-1}_j) $ ($q_{\ell}$ is the density of the particles $x^{\ell}_j$)
 \ENDFOR
 \ENDFOR
 \STATE {\bfseries Output:} $\{x^L_j, j=1\dots M\}$ 
 \end{algorithmic}
 \label{alg:KSD}
\end{algorithm}

\begin{algorithm}[ht!]
\caption{Neural Sobolev Descent (ALM Algorithm)}
\begin{algorithmic}
 \STATE {\bfseries Inputs:} $\varepsilon$ Learning rate particles, $n_c$ number of critics updates, $L$ number of iterations \\
  $\{x_i,i=1\dots N\}$, drawn from target distribution $\nu_p$\\
   $\{y_j,j=1\dots M\}$ drawn from source distribution $\nu_q$\\
   Neural critic $f_{\xi}(x)=\scal{v}{\Phi_{\omega}(x)}$, $\xi=(v,\omega)$ parameters of the neural network\\
 \STATE {\bfseries Initialize} $x^0_j=y_j, j=1\dots M$ 
 \FOR{ $\ell=1\dots L$}
\STATE \emph{\bfseries Critic Update}\\
\STATE (between particles updates gradient descent on  the critic is initialized from previous episodes)
 \FOR{$j=1$ {\bfseries to} $n_c$}
 \STATE  $\hat{\mathcal{E}}(\xi)\gets \frac{1}{N}\sum_{i=1}^N f_{\xi}(x_i) -\frac{1}{M}\sum_{j=1}^M f_{\xi}(x^{\ell-1}_j)$
 \STATE $\hat{\Omega}(\xi)\gets \frac{1}{M}\sum_j \nor{\nabla_x f_{\xi}(x^{\ell-1}_j)}^2 $
 \STATE $\pazocal{L}_{S}(\xi,\lambda)= \hat{\mathcal{E}}(\xi)+ \lambda(1-\hat{\Omega}(\xi))-\frac{\rho}{2}(\hat{\Omega}(\xi)-1)^2$
  \STATE $(g_{\xi},g_{\lambda})\gets (\nabla_{\xi} {\pazocal{L}_{S}},\nabla_{\lambda}\pazocal{L}_{S})(\xi,\lambda) $
 \STATE $ \xi\gets \xi +\eta \text{ ADAM }(\xi,g_{\xi})$\\
 \STATE $\lambda \gets \lambda - \rho g_{\lambda}$ \COMMENT{SGD rule on $\lambda$ with learning rate $\rho$}
 \ENDFOR
 \STATE \emph{\bfseries Particles Update}\\
 \FOR{$j=1$ {\bfseries to} $M$}
 \STATE $x^{\ell}_j = x^{\ell-1}_j +\varepsilon \nabla_x f_{\xi}(x^{\ell-1}_j ) $ (current $f_{\xi}$ is the critic between $q_{\ell-1}$ and $p$ )
 \ENDFOR
 \ENDFOR
 \STATE {\bfseries Output:} $\{x^L_j, j=1\dots M\}$ 
 \end{algorithmic}
 \label{alg:NSD}
\end{algorithm}


%

\section{Additional Figures of Sobolev Descent for Image color transfer and shape morphing}
\label{app:exp_additional}
\subsection{Color Transfer}
See Figure \ref{fig:coloring_1_sigma} and Figure \ref{fig:coloring_1_timetrace}.
\begin{figure}[h!]
\centering
\includegraphics[width=0.5\textwidth]{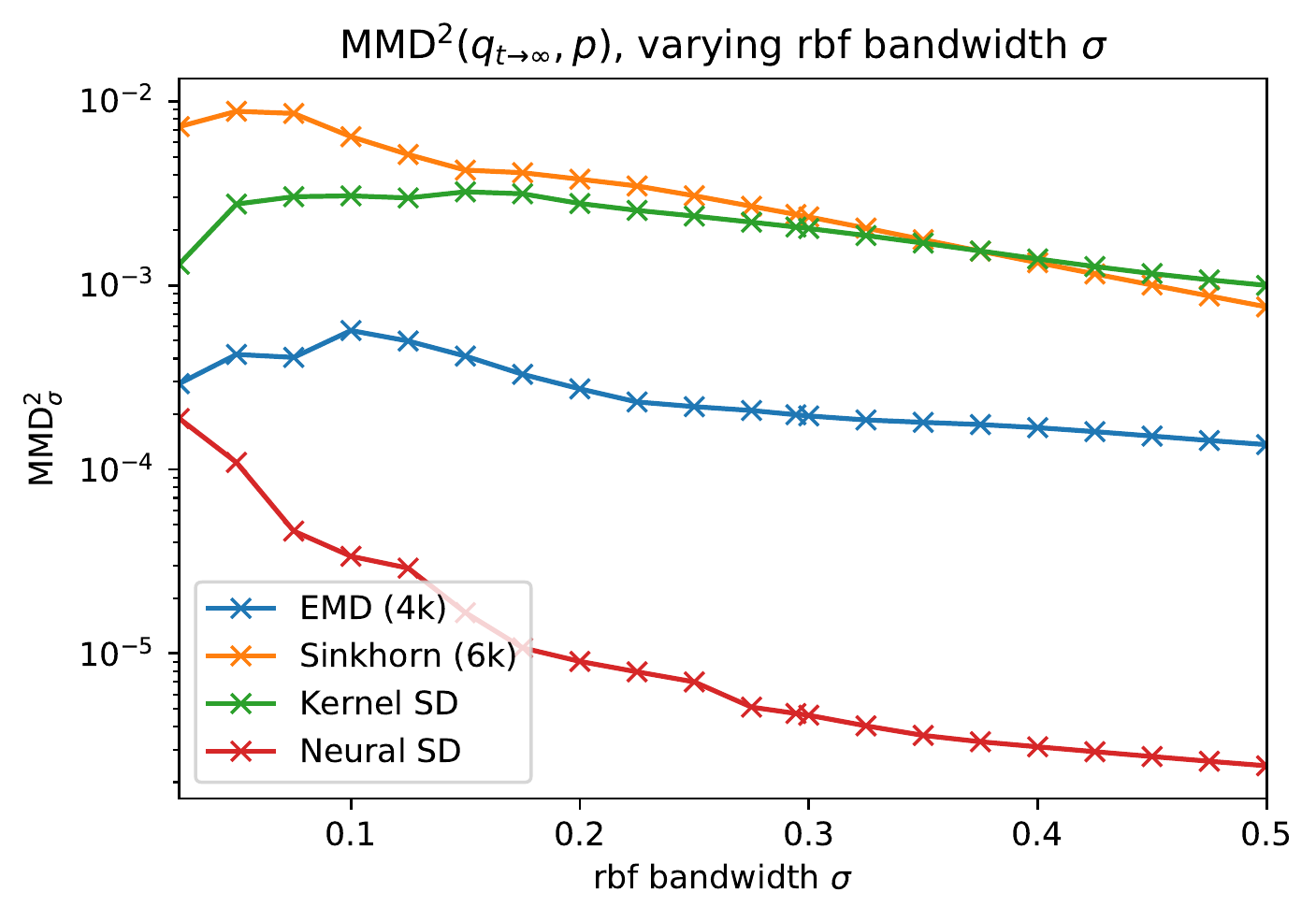}
\caption{Coloring MMD across a range of rbf bandwidths, using the final $q_t$ from Figure \ref{fig:coloring}.
We select $\sigma=0.1$ for the main Figure \ref{fig:coloring}.}
\label{fig:coloring_1_sigma}
\end{figure}

\begin{figure}[h!]
\centering
\includegraphics[width=\textwidth]{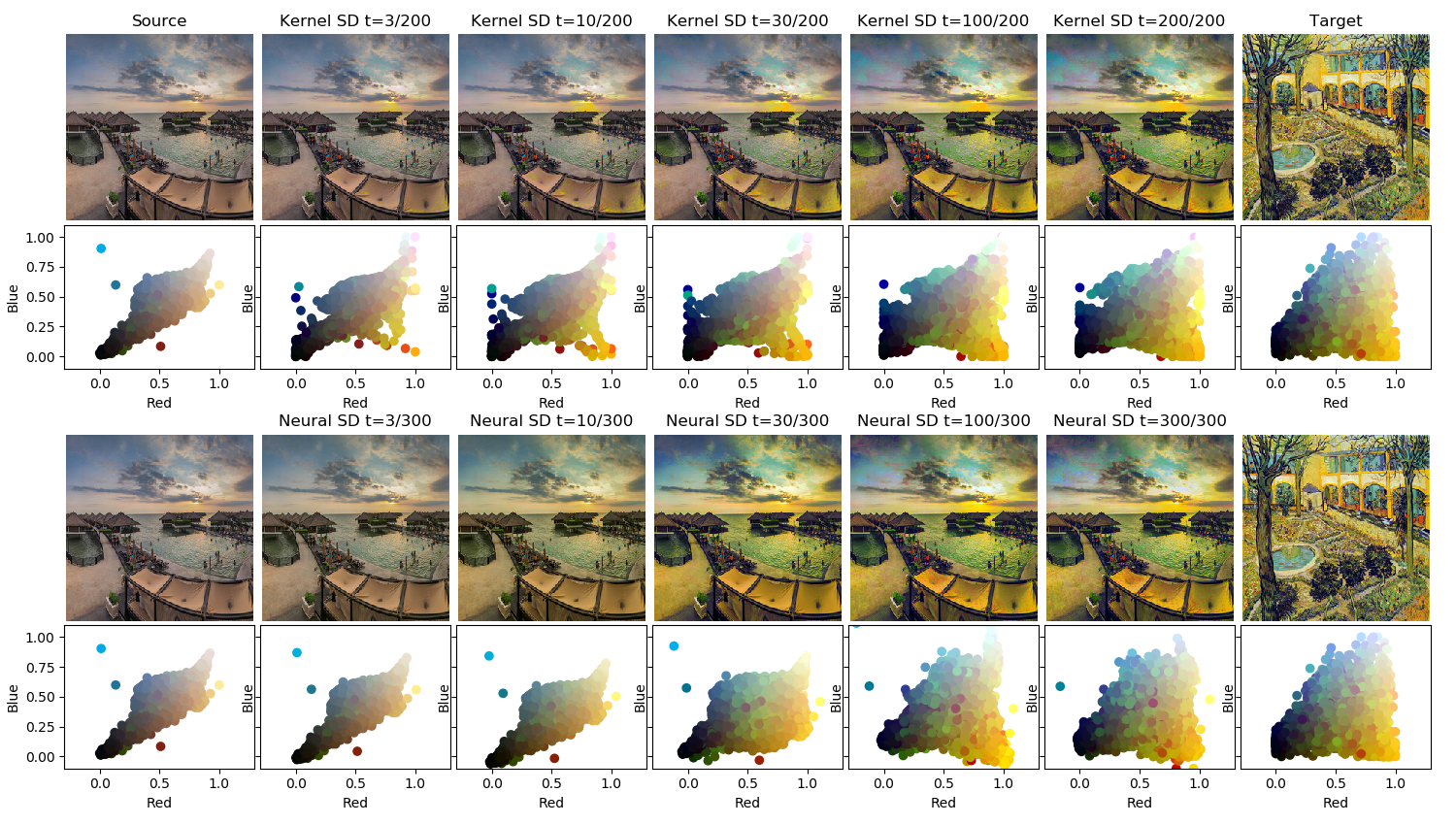}
\caption{
  Evolution of $q_t$ for Kernel and Neural Sobolev Descent.
}
\label{fig:coloring_1_timetrace}
\end{figure}

\subsection{Shape morphing: Convergence speed}\label{app:conv}
Figure \ref{fig:convergence} shows MMD convergence for shape morphing with Kernel Sobolev Descent.
 \begin{figure}[h!]
  \centering
  \includegraphics[width=0.8\textwidth]{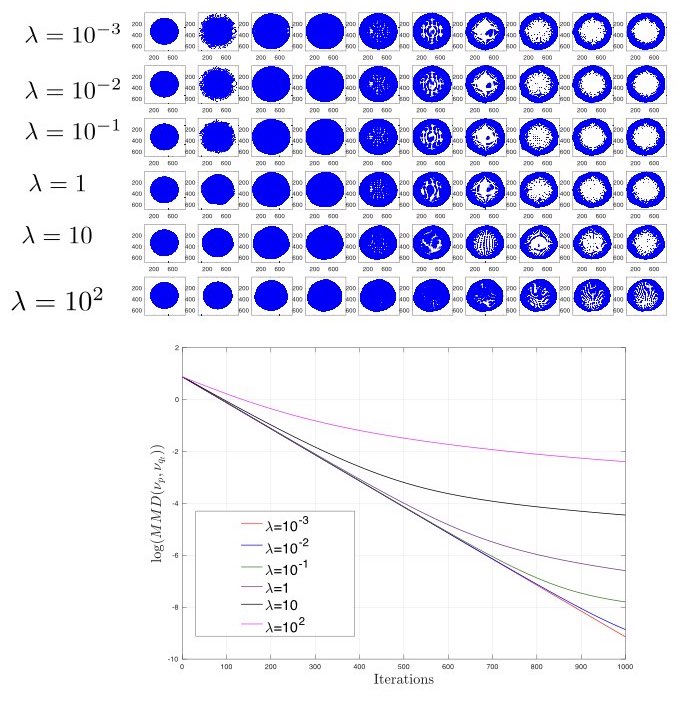}
  \caption{Shape morphing with Kernel Sobolev Descent:
    We see in this figure that for small regularization the convergence is exponential (linear in log MMD scale).
    For higher lambda values, regularization is slows down the convergence and smooths out the trajectories from $q$ to $p$.
    We see that for small lambda high frequency motions appearing in early time steps.
    Those high frequency trajectories are smoothed out with higher regularization, confirming what our theory predicts, on the effect of regularization as a spectral filtering of principal transport direction of the KDGE, favoring smoother distribution paths.}
\label{fig:convergence}
\end{figure}

\subsection{Shape morphing: Neural Sobolev Descent Level sets and Quiver plots}
\label{app:exp_morphing}
\noindent \textbf{Implementation details.}
We scaled the input coordinates to be in the $[-1,1]$ range. The neural network, implemented in pytorch,
is a simple multi-layer perceptron (MLP) with 3 hidden layers (32, 64, 32 respectively), input size 2 and output size 1 ($=f_\xi(x) \in \mathbb{R}$),
and Leaky ReLU nonlinearities with negative slope 0.2.
We use adam with learning rate $\eta=5 \text{e}^{-4}$ for $f_\xi$ and $\varepsilon=3 \text{e}^{-3}$.
For penalty weights we have $\rho=1\text{e}^{-6}$ and initialize with $\lambda=0.01$.
We use $n_c=10$ (for the first time step we warm up with $n_c=50$), and run the descent for $T=800$ steps.
Code is available on \url{https://goo.gl/tncxQm}. Videos of shapes morphing are available on \url{https://goo.gl/X4o8v6}.
\begin{figure}[h!]
  \centering
  \includegraphics[width=0.88\textwidth]{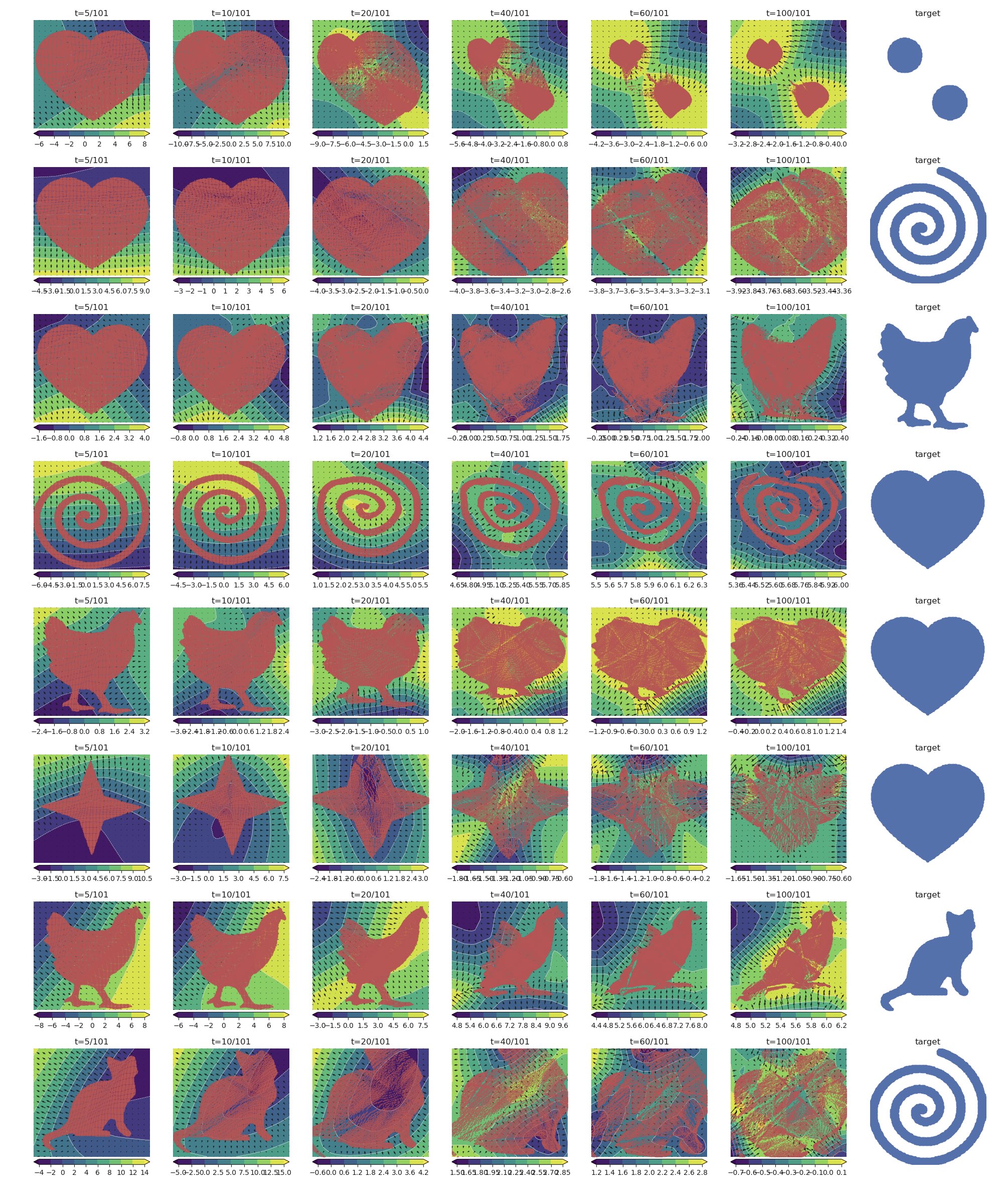}
  \caption{Level sets of $f_\xi(x)$ and quiver plots showing $\nabla_x f_\xi(x)$ for the first 100 timesteps of the Neural Sobolev Descent shape morphing results from Figure \ref{fig:morphing_nsd}.
  Videos are available on \url{https://goo.gl/X4o8v6}.
}
\label{fig:morphing_nsd_quivers}
\end{figure}

\begin{figure}[ht!]
\centering
\includegraphics[width=0.5\textwidth]{figs/sample100.png}
\captionof{figure}{Particles (Images) of Neural Sobolev Descent at convergence, when the target distribution is the trucks class of CIFAR 10 and the Sobolev critic is a learned CNN. }
\label{fig:sampleDescentbis}
\end{figure}

\begin{figure}[ht!]
\centering
\includegraphics[width=0.5\textwidth]{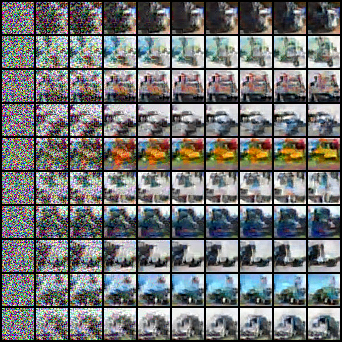}
\captionof{figure}{Paths of Particles (Images) in Neural Sobolev Descent from noise to CIFAR truck classes }
\label{fig:sampleDescentbis}
\end{figure}

\end{document}